\declaretheorem[name=Observation]{obs}
\newcommand{\zerodisplayskips}{%
  \setlength{\abovedisplayskip}{4pt}%
  \setlength{\belowdisplayskip}{4pt}%
  \setlength{\abovedisplayshortskip}{1pt}%
  \setlength{\belowdisplayshortskip}{1pt}}
\appto{\normalsize}{\zerodisplayskips}
\appto{\small}{\zerodisplayskips}
\appto{\footnotesize}{\zerodisplayskips}
\begin{document}

\title{Gradual Domain Adaptation: Theory and Algorithms}

\author{\name Yifei He$^\ast$ \email yifeihe3@illinois.edu \\
      \addr University of Illinois Urbana-Champaign
      \AND
      \name Haoxiang Wang$^\ast$ \email hwang264@illinois.edu \\
      \addr University of Illinois Urbana-Champaign
      \AND
      \name Bo Li \email bol@uchicago.edu \\
      \addr University of Chicago
      \AND
      \name Han Zhao \email hanzhao@illinois.edu \\
      \addr University of Illinois Urbana-Champaign
      }

\newcommand{\customfootnotetext}[2]{{%
		\renewcommand{\thefootnote}{#1}%
		\footnotetext[0]{#2}}}%
\customfootnotetext{$\ast$}{Equal contribution.}

\editor{Quentin Berthet}

\maketitle

\begin{abstract}
Unsupervised domain adaptation (UDA) adapts a model from a labeled source domain to an unlabeled target domain in a one-off way. Though widely applied, UDA faces a great challenge whenever the distribution shift between the source and the target is large. Gradual domain adaptation (GDA) mitigates this limitation by using intermediate domains to gradually adapt from the source to the target domain. In this work, we first theoretically analyze gradual self-training, a popular GDA algorithm, and provide a significantly improved generalization bound compared with~\cite{kumar2020understanding}. Our theoretical analysis leads to an interesting insight: to minimize the generalization error on the target domain, the sequence of intermediate domains should be placed uniformly along the Wasserstein geodesic between the source and target domains. The insight is particularly useful under the situation where intermediate domains are missing or scarce, which is often the case in real-world applications. Based on the insight, we propose \textbf{G}enerative Gradual D\textbf{O}main \textbf{A}daptation with Optimal \textbf{T}ransport (GOAT), an algorithmic framework that can generate intermediate domains in a data-dependent way. More concretely, we first generate intermediate domains along the Wasserstein geodesic between two given consecutive domains in a feature space, then apply gradual self-training to adapt the source-trained classifier to the target along the sequence of intermediate domains. Empirically, we demonstrate that our GOAT framework can improve the performance of standard GDA when the given intermediate domains are scarce, significantly broadening the real-world application scenarios of GDA. Our code is available at \href{https://github.com/uiuctml/GOAT}{\texttt{https://github.com/uiuctml/GOAT}}.
\end{abstract}
\begin{keywords}
  Gradual Domain Adaptation, Distribution Shift, Optimal Transport, Out-of-distribution Generalization
\end{keywords}

\section{Introduction}

Modern machine learning models suffer from data distribution shifts across various settings and datasets~\citep{gulrajani2021in,sagawa2021extending,koh2021wilds,hendrycks2021natural,wiles2022a}, i.e., trained models may face a significant performance drop when the test data come from a distribution largely shifted from the training data distribution. Unsupervised domain adaptation (UDA) is a promising approach to address the distribution shift problem by adapting models from the training distribution (source domain) with labeled data to the test distribution (target domain) with unlabeled data~\citep{ganin2016domain,long2015learning,zhao2018adversarial,tzeng2017adversarial}. Typical UDA approaches include adversarial training~\citep{ajakan2014domain,ganin2016domain,zhao2018adversarial}, distribution matching~\citep{zhang2019bridging,tachet2020domain,li2021learning,li2022invariant}, optimal transport~\citep{courty2016optimal,courty2017joint}, and self-training (aka pseudo-labeling)~\citep{liang2019distant,liang2020we,zou2018unsupervised,zou2019confidence,wang2022understanding}. However, as the distribution shifts become large, these UDA algorithms suffer from significant performance degradation~\citep{kumar2020understanding,sagawa2021extending,abnar2021gradual,wang2022understanding}. This empirical observation is consistent with theoretical analyses~\citep{ben-david2010theory,zhao2019domain,tachet2020domain}, which indicate that the expected test accuracy of a trained model in the target domain degrades as the distribution shift becomes larger.


\begin{figure*}[tb]
    \centering
    \includegraphics[width=0.8\textwidth]{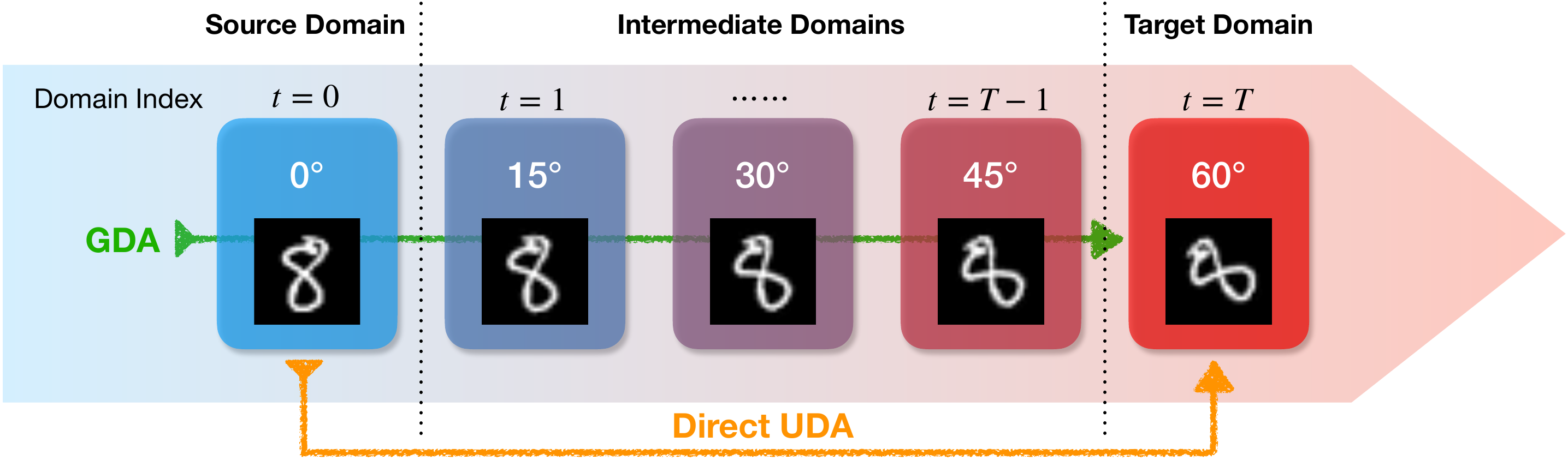}
    \caption{A schematic diagram comparing Unsupervised Domain Adaptation (UDA) vs. Gradual Domain Adaptation (GDA), using the example of Rotated MNIST. In GDA, given labeled data from a source domain, models are adapted to the target domain, with the help of unlabeled data from intermediate domains gradually shifting from the source to target.}
    \label{fig:UDA-vs-GDA}
\end{figure*}

When facing a large data distribution shift, our key strategy is the classic \emph{divide-and-conquer}: breaking the large shift into pieces of smaller shifts, resolving each piece with classic UDA approaches, and then combining all the intermediate solutions to recover a solution to the original data-shift problem (\Cref{fig:gda_algo}). Concretely, the data distribution shift between the source and target can be divided into pieces with intermediate domains bridging the two (i.e., the source and target). This methodology of leveraging intermediate data to tackle large distribution shift is known as gradual domain adaptation (GDA)~\citep{kumar2020understanding,abnar2021gradual,chen2021gradual,gadermayr2018gradual,wang2020continuously,bobu2018adapting,wulfmeier2018incremental,wang2022understanding}.

\begin{figure}[tb]
\centering
    \includegraphics[width=.8\columnwidth]{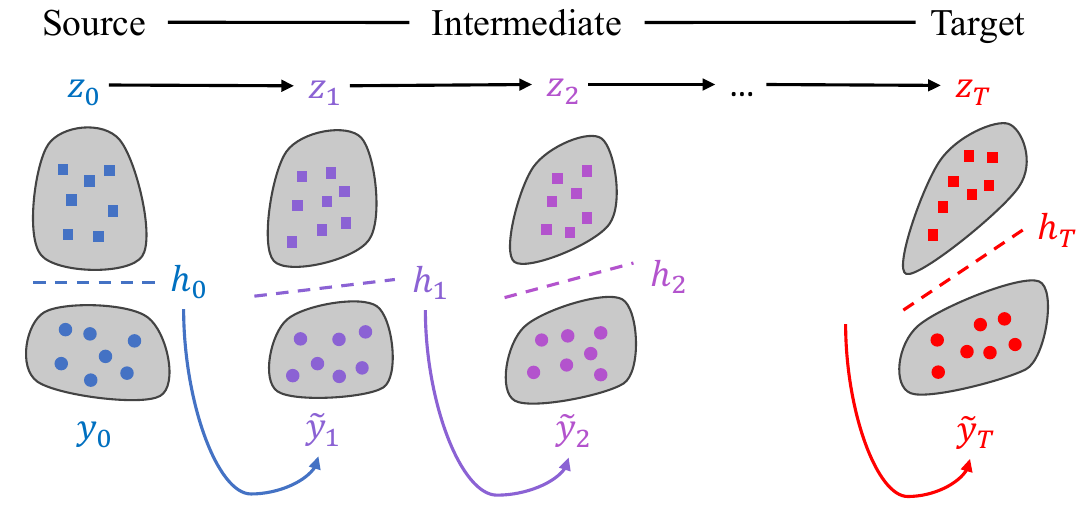}
\caption{An illustration of the divide-and-conquer strategy to address large data distribution shift (best viewed in color). The distribution shift between the source and target is divided into $T-1$ smaller pieces with (given or generated) unlabeled intermediate data. The model $h_t$ is gradually adapted in each step to reach the final solution.}
\label{fig:gda_algo}
\end{figure}

In the setting of GDA, where unlabeled intermediate data is available to the learner,~\citet{kumar2020understanding} proposed a simple yet effective algorithm, gradual self-training (GST), which applies self-training consecutively along the sequence of intermediate domains towards the target. \citet{kumar2020understanding} also proved an upper bound on the target error of GST, but it is pessimistic and unrealistic in practice. In particular, given source error $\eps_0$ and $T$ intermediate domains each with $n$ unlabelled data, the bound of \citet{kumar2020understanding} scales as $e^{\mathcal O(T)}\bigl(\eps_0+\cO \bigl(\sqrt {\log (T)/n}\bigr)\bigr)$, which grows exponentially in $T$. This indicates that the more intermediate domains for adaptation, the worse performance that gradual self-training would obtain in the target domain. In contrast, people have empirically observed that a relatively large $T$ is beneficial for gradual domain adaptation \citep{abnar2021gradual,chen2021gradual}. On the other hand, despite its simplicity, the self-training algorithm already exhibits some structures of the continual changing distributions:
\begin{obs}
    As the number of intermediate domains T increases, the accumulated error of the self-training algorithm also increases proportionally, due to the lack of ground-truth labels and the use of pseudo-labels.
\end{obs}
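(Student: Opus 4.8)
The plan is to read the Observation as an informal summary of a recursive error-propagation bound for gradual self-training, and to make it precise by writing that recursion and then unrolling it, taking care to isolate exactly which piece grows with $T$. Write $\eps_t := \mathrm{err}_{P_t}(h_t)$ for the population error of the $t$-th self-trained classifier on the $t$-th domain, with $\eps_0$ the source error. The first step is a one-step inequality: by the triangle inequality for the $0/1$-disagreement, $\eps_t \le \mathrm{err}_{P_t}(h_{t-1}) + \mathrm{dis}_{P_t}(h_{t-1},h_t)$. I would bound the carry-over term $\mathrm{err}_{P_t}(h_{t-1})$ by $\eps_{t-1}$ plus a consecutive-domain shift penalty $L\,W_1(P_{t-1},P_t)$ (via a Lipschitz/margin assumption on the loss and hypothesis class). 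The second term is the crux: since $h_t$ is produced by fitting the \emph{pseudo-labels} $h_{t-1}(x)$ on an $n$-sample from $P_t$, self-training drives the empirical pseudo-disagreement to near zero, but a standard uniform-convergence argument leaves an irreducible population residual of order $n^{-1/2}\sqrt{\log T}$ that depends on the per-domain sample size, not on $T$. Combining yields $\eps_t \le \eps_{t-1} + L\,W_1(P_{t-1},P_t) + c\,n^{-1/2}\sqrt{\log T}$.

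Unrolling this from $t=1$ to $T$ gives $\eps_T \le \eps_0 + L\sum_{t=1}^{T} W_1(P_{t-1},P_t) + cT\,n^{-1/2}\sqrt{\log T}$, and the point is that the two accumulated sums behave completely differently. When the intermediate domains lie along (or near) the Wasserstein geodesic from source to target, the shift terms sum to at most $L\,W_1(P_0,P_T)$ — a quantity essentially independent of $T$, so refining the domain sequence does not inflate this piece. By contrast, the pseudo-labeling term $cT\,n^{-1/2}\sqrt{\log T}$ grows essentially linearly in $T$, because each self-training step is a fresh opportunity to mislabel an $\Omega(n^{-1/2})$ fraction of the data with no ground-truth signal available to correct it, and this cost is incurred once per domain. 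This is exactly the mechanism the Observation names — ``due to the lack of ground-truth labels and the use of pseudo-labels'' — and it shows the accumulated error scales proportionally to $T$ for fixed $n$ (while still being only linear, not exponential as in the bound of \citet{kumar2020understanding}).

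To show the linear dependence is genuine and not an artifact of loose bounding, I would complement the upper bound with a lower-bound instance: a low-dimensional construction (e.g. threshold classifiers with probability mass concentrated in a thin band sliding across the true decision boundary) in which the empirical pseudo-label threshold at step $t$ is displaced from the true threshold by $\Theta(n^{-1/2})$, and — crucially — the displacement is one-directional and self-reinforcing, since at step $t+1$ the band is re-labeled using the already-shifted boundary. Summing these non-cancelling per-step displacements gives $\eps_T \ge \eps_0 + \Omega(T n^{-1/2})$, matching the upper bound up to logarithmic factors. The main obstacle is precisely this lower-bound step: one must arrange the domain sequence so that the per-step pseudo-labeling errors persist and add rather than washing out, and verify that no margin or self-correction effect annihilates the $n^{-1/2}$ increment — this is where the phrase ``the accumulated error also increases'' has to be earned rather than assumed. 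A secondary subtlety is bookkeeping: increasing $T$ shrinks each consecutive shift $W_1(P_{t-1},P_t)$, so the linear growth must be attributed cleanly to the sample-size-driven pseudo-labeling term and not confused with the (bounded) total path-length term.
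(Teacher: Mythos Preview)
The paper does not prove this statement at all: it is an informal motivating remark in the introduction, offered without argument, and its content is only made precise later by the formal machinery (\cref{lemma:error-diff}, \cref{prop:algorithm-stability}, and the naive telescoped bound in Eq.~\eqref{eq:naive-gen-bound}). Your upper-bound derivation --- the one-step inequality $\eps_t \le \eps_{t-1} + L\,W_1(P_{t-1},P_t) + c\,n^{-1/2}\sqrt{\log T}$ unrolled over $T$ steps --- is exactly the paper's \cref{prop:algorithm-stability} applied iteratively, which is how the paper arrives at Eq.~\eqref{eq:naive-gen-bound}. So on the upper-bound side you are recapitulating the paper's own formalization of the Observation, with the same decomposition into a path-length term and a per-step pseudo-labeling residual.

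Where you go beyond the paper is the lower-bound construction. The paper never claims or proves a matching $\Omega(T n^{-1/2})$ lower bound; the Observation is purely heuristic, and the word ``proportionally'' is not backed by any tightness argument. Your sliding-threshold instance is a reasonable plan, but as you yourself flag, the hard part is ruling out cancellation or self-correction: in many natural self-training setups the per-step displacement is mean-zero conditional on the past, so the accumulated bias grows like $\sqrt{T}\,n^{-1/2}$ rather than $T\,n^{-1/2}$, and engineering a one-directional drift requires an asymmetric mass placement that you have not yet specified. This is a genuine gap if you want a proof rather than an observation, but it is also strictly more than the paper attempts --- the paper is content to let the Observation stand as intuition and only proves the upper bound.
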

\begin{obs}
    As the number of intermediate domains T increases, by using the pseudo-labels, the effective sample size used by the self-training algorithm scales as $\cO(nT)$.
\end{obs}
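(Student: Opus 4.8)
The plan is to read ``effective sample size'' as the quantity that plays the role of the denominator in the finite-sample term of a target-error bound for gradual self-training, and to show that running the procedure to completion makes that quantity scale as $nT$ rather than as $n$. First I would fix notation: let $h_0$ be the source-trained classifier and, for $t = 1, \dots, T$, let $S_t$ (with $|S_t| = n$) be the unlabeled sample from the $t$-th intermediate domain $P_t$ and $h_t \in \arg\min_{h \in \mathcal H} \widehat R_{S_t}(h, h_{t-1})$ the result of self-training $h_{t-1}$ against its own pseudo-labels on $S_t$. The first, essentially combinatorial, point is that by the time $h_T$ is produced every one of the $nT$ points in $S_1 \cup \cdots \cup S_T$ has received a pseudo-label, and $h_T$ is a deterministic function of this entire pooled, pseudo-labeled set of size $nT$; so the statement amounts to showing that these $nT$ pseudo-labeled examples enter the bound on $\eps_T \coloneqq \eps_{P_T}(h_T)$ at essentially full strength, as $nT$ genuinely labeled i.i.d.\ examples would.

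Second, I would prove a one-step inequality of the following form: with probability at least $1 - \delta/T$, for all $h \in \mathcal H$,
\[
\eps_{P_t}(h) \;\le\; \widehat R_{S_t}(h, h_{t-1}) + \eps_{P_t}(h_{t-1}) + \Phi\bigl(n, \mathcal H, \delta/T\bigr),
\]
where $\eps_{P_t}(h_{t-1})$ measures the gap between pseudo-labels and true labels on $P_t$ and $\Phi$ is a Rademacher/VC penalty of order $\sqrt{(\mathrm{comp}(\mathcal H) + \log(T/\delta))/n}$. Unrolling over $t$, union-bounding over the $T$ steps (which is where the $\log T$ enters, as in \citet{kumar2020understanding}), and using the preceding observation on accumulated error to relate $\eps_{P_t}(h_{t-1})$ to the consecutive-domain shift $W_p(P_{t-1}, P_t)$, one obtains a bound whose ``shift'' part telescopes to $\sum_t W_p(P_{t-1}, P_t) = W_p(P_0, P_T)$ --- independent of $T$, precisely because the intermediate domains are placed uniformly along the Wasserstein geodesic (the insight established earlier in the paper). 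The remaining work, and the actual content of the present statement, is to show that the statistical part aggregates not as the crude sum $T \cdot \Phi(n, \cdot)$ but as $\Phi(nT, \cdot)$: the per-step excess risks are not worst-case but are each tied to $W_p(P_{t-1}, P_t) = \Theta(1/T)$, so an aggregation exploiting this smallness --- a variance- or Cauchy--Schwarz-type argument across steps rather than a triangle-inequality sum --- spreads the statistical error over all $nT$ pooled samples, yielding an effective denominator of order $nT$.

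The step I expect to be the main obstacle is exactly this aggregation: equivalently, showing that the $n$ samples consumed at an early round are not washed out by the subsequent rounds of pseudo-labeling. Concretely one must establish that the one-step self-training map amplifies error by a factor $1 + \mathcal O(1/T)$ rather than by a fixed constant $c > 1$; only then does $\prod_{t} (1 + \mathcal O(1/T)) = \mathcal O(1)$, which is what simultaneously keeps the accumulated error of the preceding observation bounded \emph{and} lets all $nT$ pooled samples count. Proving the $1 + \mathcal O(1/T)$ amplification rate is the technical crux: it calls for a self-bounding argument relating the fraction of flipped pseudo-labels between $P_{t-1}$ and $P_t$ to $W_p(P_{t-1}, P_t) = \Theta(1/T)$ under a margin or Lipschitzness condition on $\mathcal H$. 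Granting that, the remainder is bookkeeping with union bounds and the triangle inequality for $W_p$.
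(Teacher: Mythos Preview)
This statement is an informal \emph{observation} in the Introduction, not a theorem the paper proves directly; its formal content is supplied later by Theorem~\ref{thm:gen-bound}, whose bound contains the term $\widetilde{\mathcal O}(1/\sqrt{nT})$. The paper obtains that term by an entirely different route from yours: it embeds gradual self-training into the online-learning framework of \citet{kuznetsov2020discrepancy}, and then introduces a ``reductive view'' (Appendix~\ref{supp:proof:gen-bound}) in which each of the $nT$ individual samples---rather than each of the $T$ domains---is treated as one time step. The $nT$ then appears mechanically as the time horizon in the sequential Rademacher complexity $\mathcal R^{\mathrm{seq}}_{nT}(\ell\circ\mathcal H)$ and in $\|\mathbf q_{n(T+1)}\|_2$, not through any aggregation of per-step concentration terms.

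Your plan has a genuine gap at exactly the point you flag as the crux. You propose to convert the crude sum $T\cdot\Phi(n,\cdot)$ into $\Phi(nT,\cdot)$ by a variance- or Cauchy--Schwarz-type argument tied to $W_p(P_{t-1},P_t)=\Theta(1/T)$, and to justify this via a multiplicative amplification factor of $1+\mathcal O(1/T)$ per step. But the paper's one-step stability result (Proposition~\ref{prop:algorithm-stability}) is \emph{additive}, not multiplicative, and more importantly the final bound (Eq.~\eqref{eq:gen-bound}) \emph{retains} the $\mathcal O(T/\sqrt n)$ term---that is precisely Observation~1, the accumulated pseudo-labeling error. The $1/\sqrt{nT}$ term does not replace $T/\sqrt n$; it appears \emph{alongside} it, from an independent source (the online discrepancy/sequential-complexity machinery). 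So the premise that the per-step statistical penalties can be collapsed into a single $\Phi(nT,\cdot)$ is not what the paper shows, and your proposed mechanism for achieving it (small shift $\Rightarrow$ small amplification $\Rightarrow$ Cauchy--Schwarz pooling) does not connect the Rademacher term $\Phi(n,\cdot)\asymp 1/\sqrt n$, which is oblivious to the shift size, to anything of order $1/T$. Finally, your telescoping claim $\sum_t W_p(P_{t-1},P_t)=W_p(P_0,P_T)$ invokes the geodesic insight as if it were already available, but in the paper that insight is a \emph{consequence} of the bound, derived after Theorem~\ref{thm:gen-bound}, not an input to it.
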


Clearly, there is a fundamental tradeoff in the number of intermediate domains $T$ on the error of the self-training algorithm over the sequence of distributions. The existing generalization bound given by~\citet{kumar2020understanding} does not characterize this phenomenon. Furthermore, due to the exponential scaling factor, this upper bound becomes vacuous when $T$ is only moderately large. Based on the above two observations and the sharp gap between existing theory and empirical observations of gradual domain adaptation, we attempt to address the following important and fundamental questions:

\begin{quote}
\itshape
    For gradual domain adaptation, given the source domain and target domain, how does the number of intermediate domains impact the target generalization error? Is there an optimal choice for this number? If yes, then how to construct the optimal path of intermediate domains?
\end{quote}

To answer these questions, we first carry out a novel theoretical analysis on gradual self-training~\citep{kumar2020understanding}, then present a practical algorithm accordingly, which significantly outperforms vanilla gradual self-training. 
For the theoretical analysis, our setting is more general than that of \citet{kumar2020understanding}, in the sense that i) we have a milder assumption on the distribution shift, ii) we put almost no restriction on the loss function, and iii) our technique applies to all the $p$-Wasserstein distance metrics. As a comparison, existing analysis is restricted to ramp loss\footnote{Ramp loss can be seen as a truncated hinge loss so that it is bounded and more amenable for technical analysis.} and only applies to the $\infty$-Wasserstein metric. At a high level, we first focus on analyzing a pair of consecutive domains, and upper bound the error difference of any classifier over domains bounded by their $p$-Wasserstein distance; then, we telescope this lemma to the entire path over a sequence of domains, and finally obtain an error bound for gradual self-training: $\eps_0\mathrm{+}\cO\bigl(T\Delta \mathrm{+} \frac{T}{\sqrt{n}}\bigr) \mathrm{+} \widetilde{\mathcal O}\bigl(\frac{1}{\sqrt{nT}}\bigr)$, where $\Delta$ is the average $p$-Wasserstein distance between consecutive domains. We summarize the improvement of our analysis compared with~\citet{kumar2020understanding} in \Cref{Tab:theory}. 

\begin{table}
\centering
\captionof{table}{\small Comparison between our theoretical analysis and \cite{kumar2020understanding}. Our analysis is applicable to a more general setting and the generalization error bound is exponentially tighter in terms of the dependency on $T$.}
\begin{tabular}{l cc}
\toprule
 & \cite{kumar2020understanding} & Our Result\\
\midrule
Applicable Loss Functions & Ramp loss & All $\rho-$Lipschitz losses\\
Applicable Distance Metrics & $\infty-$Wasserstein metric & All $p-$Wasserstein metrics\\
Generalization Error Bound & $e^{\mathcal O(T)}\bigl(\eps_0+\cO \bigl(\sqrt {\log (T)/n}\bigr)\bigr)$ & $\eps_0\mathrm{+}\cO\bigl(T\Delta \mathrm{+} \frac{T}{\sqrt{n}}\bigr) \mathrm{+} \widetilde{\mathcal O}\bigl(\frac{1}{\sqrt{nT}}\bigr)$ \\
\bottomrule     
\label{Tab:theory}           
\end{tabular}
\end{table}

Interestingly, our bound indicates the existence of an optimal choice of $T$ that minimizes the generalization error, which could explain the success of moderately large $T$ used in practice. Notably, the $T\Delta$ in our bound could be interpreted as the length of the path of intermediate domains bridging the source and target, suggesting that one should also consider minimizing the path length $T \Delta$ in practices of gradual domain adaptation. For example, given fixed source and target domains, the path length $T\Delta$ is minimized as the intermediate domains are distributed along the Wasserstein geodesic between the source domain and target domain. 

The above insight is particularly helpful under the situation where intermediate domains are missing or scarce, which is often the case in real-world applications. It inspires a natural method to generate more intermediate domains useful for GDA. Based on this finding, we propose Generative Gradual Domain Adaptation with Optimal Transport (GOAT). At a high-level, GOAT contains the following steps:
\begin{enumerate}[label=\roman*]
    \item Generate intermediate domains ($z_t$ in \Cref{fig:gda_algo}) between each pair of consecutive given domains along the Wasserstein geodesic in a feature space.
    \item Apply gradual self-training (GST) over the sequence of given and generated domains. This produces a sequence of models $h_t$ and pseudo-labels $\tilde{y}_t$ as demonstrated in \Cref{fig:gda_algo}.
\end{enumerate}




Empirically, we conduct experiments on Rotated MNIST, Color-Shift MNIST, Portraits \citep{portraits} and Cover Type \citep{CoverType}, four benchmark datasets commonly used in the literature of GDA. The experimental results show that our GOAT significantly outperforms vanilla GDA, especially when the number of given intermediate domains is small. The empirical results also confirm the theoretical insights: i) when the distribution shift between a pair of consecutive domains is large, one can generate more intermediate domains to further improve the performance of GDA; ii) there exists an optimal choice for the number of generated intermediate domains.
\section{Preliminaries}\label{sec:prelim}
\paragraph{Notation}
$\cX,\cY$ denote the input and the output space, and $X,Y$ denote random variables taking values in $\cX,\cY$. In this work, each domain has a data distribution $\mu$ over $\cX \times \cY$, thus it can be written as $\mu = \mu(X,Y)$. When we only consider samples and disregard labels, we use $\mu(X)$ to refer to the sample distribution of $\mu$ over the input space $\cX$.

\subsection{Problem Setup}\label{sec:setup}
\paragraph{Binary Classification}
In the theoretical anlysis, we focus on binary classification with labels $\{-1,1\}$. Also, we consider $\mathcal Y$ as a compact space in $\bR$.
\paragraph{Gradually shifting distributions} We have $T\mathrm{+}1$ domains indexed by $\{0,1,...,T\}$, where domain $0$ is the source domain, domain $T$ is the target domain and domain $1,\dots,T\mathrm{-}1$ are the intermediate domains. These domains have distributions over $\cX \times \cY$, denoted as $\mu_{0}, \mu_{1}, \ldots, \mu_{T}$.
\paragraph{Classifier and Loss}
Consider the hypothesis class as $\mathcal H $ and the loss function as $\ell$.
We define the population loss of classifier $h\in \mathcal H$ in domain $t$ as
\begin{align*}
    \eps_{t}(h)\equiv \eps_{\mu_t}(h) \triangleq  \E_{\mu_t}[\ell (h(X),~Y)] = \E_{X,Y \sim \mu_t}[\ell (h(X),~Y)]
\end{align*}

\paragraph{Unsupervised Domain Adaptation (UDA)}
In UDA, we have a source domain and a target domain. During the training stage, the learner can access $m$ labeled samples from the source domain and $n$ unlabeled samples from the target domain. In the test stage, the trained model will then be evaluated by its prediction accuracy on samples from the target domain. The objective for UDA is to find the classifier $h^\star$ which minimizes the loss on the target domain
\begin{align}\label{eq:uda_obj}
    h^\star=\argmin_{h\in\cH} \E_{X,Y \sim \mu_t}[\ell (h(X),~Y)].
\end{align}

\paragraph{Gradual Domain Adaptation (GDA)}
Most UDA algorithms adapt models from the source to target in a one-step fashion, which can be challenging when the distribution shift between the two is large. Instead, in the setting of GDA, there exists a sequence of additional $T-1$ unlabeled intermediate domains bridging the source and target. We denote the underlying data distributions of these intermediate domains as $\mu_1(X,Y),\dots,\mu_{T-1}(X,Y)$, with $\mu_0(X, Y)$ and $\mu_T(X, Y)$ being the source and target domains, respectively. In this case, for each domain $t\in \{1,\dots,T\}$, the learner has access to $S_t$, a set of $n$ unlabeled data drawn i.i.d. from $\mu_t(X)$. Same as UDA, the goal of GDA is still to make accurate predictions on test data from the target domain (Eq. \ref{eq:uda_obj}), while the learner can train over $m$ labeled source data and $nT$ unlabeled data from $\{S_t\}_{t=1}^T$. To contrast the setting of UDA and GDA, we provide an illustration in Fig.~\ref{fig:UDA-vs-GDA} that compares UDA with GDA, using the Rotated MNIST dataset as an example.

We make a mild assumption on the input data below, which can be easily achieved by data preprocessing. This assumption is common in machine learning theory works~\citep{cao2019generalization,fine-grained,rakhlin2014notes}.
\begin{restatable}[Bounded Input Space]{as}{InputBound}
\label{assum:input-bound}
Consider the input space $\mathcal X$ is compact and bounded in the $d$-dimensional unit $L_2$ ball, i.e., $\cX \subseteq \{x \in \bR^d: \|x\|_{2}\leq 1\}$. 
\end{restatable}
This assumption effectively normalizes the input space and eliminates explicit dependence on the input dimension $d$ from our bounds\footnote{If we instead assumed $\|x\|_2 \leq \sqrt{d}$, which would correspond to $\mathcal{X} \subseteq [0,1]^d$, the constant $B$ in \Cref{assum:bounded-complexity} would gain a factor of $\sqrt{d}$.}.

To quantify distribution shifts between domains, we adopt the well-known Wasserstein distance metric in the Kantorovich formulation~\citep{kantorovich1939}, which is widely used in the optimal transport literature~\citep{villani2009optimal}.
\begin{restatable}[$p$-Wasserstein Distance]{defi}{WassersteinDistance} Consider two measures $\mu$ and $\nu$ over $\mathbb S \subseteq \bR^d$. For any $p\geq 1$, given a distance metric $d$, their $p$-Wasserstein distance is defined as 
\begin{align}
    W_{p}(\mu, \nu):=\left(\inf _{\gamma \in \Gamma(\mu, \nu)} \int_{\mathbb S \times \mathbb S} d(x, y)^{p} \mathrm{~d} \gamma(x, y)\right)^{1 / p}
\end{align}
where $ \Gamma(\mu, \nu)$ is the set of all measures over $\mathbb S \times \mathbb S$ with marginals equal to $\mu$ and $\nu$ respectively.
\end{restatable}

In this paper, we consider $p$ as a preset constant satisfying $p\geq 1$. Then, we can use the $p$-Wasserstein metric to measure the distribution shifts between consecutive domains.

\begin{restatable}[Distribution Shifts]{defi}{DistributionShifts} For $t=1,\dots,T$, denote
\begin{align}
    \Delta_t = W_p(\mu_{t-1}, \mu_{t})
\end{align}
Then, we define the average of distribution shifts between consecutive domains as
\begin{align}
    \Delta = \frac{1}{T} \sum_{t=1}^{T} \Delta_t
\end{align}
\end{restatable}

\paragraph{Remarks on Wasserstein Metrics} The $p$-Wasserstein metric has been widely adopted in many sub-areas of machine learning, such as generative models~\citep{WGAN,WAE} and domain adaptation~\citep{courty2014domain,courty2016optimal,courty2017joint,redko2019optimal}. Most of these works use $p=1$ or $2$, which is known to be good at quantifying many real-world data distributions~\citep{peyre2019computational}. However,  the analysis in~\citet{kumar2020understanding} only applies to $p=\infty$, which is uncommon in practice and can lead to a loose upper bound due to the monotonicity property of $W_p$. Since $W_\infty$ distance focuses on the maximum transportation cost between the measures, it is more prone to unboundedness, making it a less robust choice compared with $W_1$ and $W_2$.


\subsection{Gradual Self-Training}

The vanilla self-training algorithm (denoted as $\mathrm{ST}$) adapts classifier $h$ with empirical risk minimization (ERM) over pseudo-labels generated on an unlabelled dataset $S$, i.e.,
\begin{align}\label{eq:ST}
    h' = \mathrm{ST}(h,S) = \argmin_{f\in \mathcal H} \sum_{x\in S} \ell(f(x), h(x))
\end{align}
where $h(x)$ represents pseudo-labels provided by the trained classifier $h$, and $h'$ is the new classifier fitted to the pseudo-labels. The technique of hard labelling (i.e., converting $h(x)$ to one-hot labels) is used in some practices of self-training~\citep{xie2020self,van2020survey}, which can be viewed as adding a small modification to the loss function $\ell$.

Gradual self-training~\citep{kumar2020understanding}, applies self-training to the intermediate domains and the target domain successively, i.e., for $t = 1,\dots, T$,
\begin{align}\label{eq:gradual-ST}
    h_t = \mathrm{ST}(h_{t-1},S_t) = \argmin_{f\in \mathcal H} \sum_{x\in S_t} \ell(f(x), h_{t-1}(x))
\end{align}
where $h_0$ is the model fitted on the source data. $h_T$ is the final trained classifier that is expected to enjoy a low population error in the target domain, i.e., $\eps_{T}$.

Intuitively, one can expect that when the distribution shift between each consecutive pair of intermediate domains is large, the quality of the pseudo-labels obtained from the previous classifier can degrade significantly, hence hurting the final target generalization. This scenario is particularly relevant when the number of given intermediate domains is relatively small. 
\section{Theoretical Analyses}
In this section, we theoretically analyze gradual self-training under assumptions more relaxed than~\citet{kumar2020understanding}, and obtain a significantly improved error bound. Our theoretical analysis is roughly split into two steps: i) we focus on a pair of arbitrary consecutive domains with bounded distributional distance, and upper bound the prediction error difference of any classifier in the two domains by the distributional distance (\cref{lemma:error-diff}); ii) we view gradual self-training from an online learning perspective, and adopt tools in the online learning literature to analyze the algorithm together with results of step (i), leading to an upper bound (\cref{thm:gen-bound}) of the target generalization error of gradual self-training. Notably, our bound provides several profound insights on the optimal path of intermediate domains used in gradual domain adaptation (GDA), and also sheds light on the design of GDA algorithms. The proofs of all theoretical statements are provided in \cref{supp:proof}.

\subsection{Error Difference over Distribution Shift}\label{sec:error-diff}

Intuitively, gradual domain adaptation (GDA) splits the large distribution shift between the source domain and target domain into smaller shifts that are segmented by intermediate domains. Thus, in the view of reductionism~\citep{anderson1972more}, one should understand what happens in a pair of consecutive domains in order to comprehend the entire GDA mechanism.

To start, we adopt three assumptions from the prior work~\citep{kumar2020understanding}\footnote{Assumption \ref{assum:Lipschitz-loss} is not explicitly made by~\citet{kumar2020understanding}. Instead, they directly assume the loss function to be ramp loss, which is a more strict assumption than our Assumption \ref{assum:Lipschitz-loss}.}.

\begin{restatable}[$R$-Lipschitz Classifier]{as}{LipschitzModel}\label{assum:Lipschitz-model}
We assume each classifier $h \in \cH$ is $R$-Lipschitz in $\ell_2$ norm, i.e., $\forall x,x' \in \cX$,
\begin{align*}
    |h(x) - h(x')| \leq R \|x-x'\|_2
\end{align*}
\end{restatable}

\begin{restatable}[$\rho$-Lipschitz Loss]{as}{LipschitzLoss}\label{assum:Lipschitz-loss}
We assume the loss function $\ell$ is $\rho$-Lipschitz, i.e., $\forall y,y' \in \cY$,
\begin{align}\label{eq:Lipschitz-loss-assum}
    |\ell(y,\cdot) - \ell(y',\cdot)| \leq \rho \|y-y'\|_2\\ 
    |\ell(\cdot,y) - \ell(\cdot,y')| \leq \rho \|y-y'\|_2
\end{align}
\end{restatable}

\begin{restatable}[Bounded Model Complexity]{as}{BoundedComplexity}
\label{assum:bounded-complexity} \footnote{This assumption is actually reasonable and not strong. For example, under Assumption \ref{assum:input-bound} and \ref{assum:Lipschitz-model}, linear models directly satisfy \eqref{eq:rademacher-assum}, as proved in~\citep{kumar2020understanding,liang2016cs229t}.}
    We assume the Rademachor complexity~\citep{bartlett2002rademacher}, $\cR$, of the hypothesis class, $\cH$, is bounded for any distribution $\mu$ considered in this paper. That is, for some constant $B > 0$,
    \begin{align}\label{eq:rademacher-assum}
        \cR_n(\cH; \mu) = \E\left[\sup_{h \in \cH}\frac{1}{n} \sum_{i=1}^n \sigma_i h(x_i) \right]\leq \frac{B}{\sqrt{n}}
    \end{align}
    where the expectation is w.r.t. $x_i \sim \mu(X)$ and $\sigma_i \sim \mathrm{Uniform}(\{-1,1\})$ for $i=1,\dots,n$.
\end{restatable}

With these assumptions, we can bound the population error difference of a classifier between a pair of shifted domains in the following proposition. The proof is in \cref{supp:proof:error-diff}.
\begin{restatable}[Error Difference over Shifted Domains]{lem}{errordiff}
\label{lemma:error-diff}
    Consider two arbitrary measures $\mu, \nu$ over $\cX \times \cY$. Then, for arbitrary classifier $h$ and loss function $\ell$ satisfying Assumption \ref{assum:Lipschitz-model}, \ref{assum:Lipschitz-loss}, the population loss of $h$ on $\mu$ and $\nu$ satisfies
    \begin{align}\label{eq:thm:additive-bound:main}
        |\eps_{\mu}(h) - \eps_{\nu}(h)| & \leq \rho \sqrt{R^2 + 1} ~W_p(\mu, \nu)
    \end{align}
    where $W_p$ is the Wasserstein-$p$ distance metric and $p\geq 1$.
\end{restatable}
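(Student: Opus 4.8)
The plan is to control the error gap by passing to an optimal coupling between $\mu$ and $\nu$ and then reducing everything to the pointwise Lipschitz estimates in Assumptions~\ref{assum:Lipschitz-model} and~\ref{assum:Lipschitz-loss}. Since $\cX\times\cY$ is compact, there exists an optimal transport plan $\gamma\in\Gamma(\mu,\nu)$ attaining $W_p(\mu,\nu)$ (or, to sidestep existence entirely, one may take an $\eta$-optimal plan and send $\eta\to 0$ at the end). Writing the two population losses as expectations against the two marginals of $\gamma$, we obtain
\begin{align*}
    \eps_{\mu}(h) - \eps_{\nu}(h) = \E_{((x,y),(x',y'))\sim\gamma}\bigl[\ell(h(x),y) - \ell(h(x'),y')\bigr].
\end{align*}

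The key step is a pointwise bound on the integrand. Adding and subtracting $\ell(h(x'),y)$, then applying the Lipschitzness of $\ell$ in each of its two arguments and the $R$-Lipschitzness of $h$, gives
\begin{align*}
    \bigl|\ell(h(x),y) - \ell(h(x'),y')\bigr| \leq \rho\,|h(x)-h(x')| + \rho\,\|y-y'\|_2 \leq \rho\bigl(R\|x-x'\|_2 + \|y-y'\|_2\bigr).
\end{align*}
A Cauchy--Schwarz inequality applied to the vectors $(R,1)$ and $(\|x-x'\|_2,\|y-y'\|_2)$ then yields $R\|x-x'\|_2+\|y-y'\|_2 \leq \sqrt{R^2+1}\,\sqrt{\|x-x'\|_2^2+\|y-y'\|_2^2} = \sqrt{R^2+1}\, d\bigl((x,y),(x',y')\bigr)$, where $d$ denotes the Euclidean metric on the product space $\cX\times\cY\subseteq\bR^{d+1}$.

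Finally I would take expectations over $\gamma$ and invoke Jensen's inequality (equivalently $\|\cdot\|_{L^1(\gamma)}\le\|\cdot\|_{L^p(\gamma)}$, valid since $p\ge1$ and $\gamma$ is a probability measure) to turn $\E_\gamma[d]$ into $\bigl(\E_\gamma[d^p]\bigr)^{1/p}$, which equals $W_p(\mu,\nu)$ by optimality of $\gamma$. Chaining the three displays gives $\eps_\mu(h)-\eps_\nu(h)\le \rho\sqrt{R^2+1}\,W_p(\mu,\nu)$, and since $\mu,\nu$ play symmetric roles the same bound holds for $\eps_\nu(h)-\eps_\mu(h)$, justifying the absolute value. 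I do not expect a genuine obstacle here — this is essentially a Kantorovich-style transport estimate — but the one point requiring care is the choice of metric on $\cX\times\cY$ (the stated bound implicitly uses the $\ell_2$ product metric), and ensuring that the constant is exactly $\sqrt{R^2+1}$ rather than something looser, which is precisely what the Cauchy--Schwarz step is engineered to deliver.
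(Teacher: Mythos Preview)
Your argument is correct and follows essentially the same route as the paper: couple $\mu$ and $\nu$, bound the integrand pointwise via the Lipschitz assumptions, then pass to $W_p$. The only noteworthy difference is the product metric on $\cX\times\cY$: the paper uses the $\ell_1$-sum metric $\|x-x'\|_2+\|y-y'\|_2$ and the crude bound $R a + b \le \sqrt{R^2+1}\,(a+b)$, then invokes the monotonicity $W_1\le W_p$; you instead use Cauchy--Schwarz to reach the Euclidean product metric $\sqrt{\|x-x'\|_2^2+\|y-y'\|_2^2}$ and then Jensen to upgrade to $W_p$. Your version makes the constant $\sqrt{R^2+1}$ sharp for the $\ell_2$ product metric, whereas the paper's constant is loose for the $\ell_1$ metric (the tight one there would be $\max(R,1)$), so your concern about which metric underlies $W_p$ is well placed---but either reading yields the stated inequality.
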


Eq. \eqref{eq:gradual-ST} depicts each iteration of gradual self-training with an past classifier $h_t$ and a new one $h_{t+1}$, which are fitted to $S_{t}$ and $S_{t+1}$, respectively. Naturally, one might be curious about how well the performance of $h_{t+1}$ in domain $t\mathrm{+}1$ is compared with $h_{t}$ in domain $t$. We answer this question as follows, with proof in Appendix \ref{supp:proof:algo-stability}.
\begin{restatable}[The stability of the ST algorithm]{prop}{AlgoStability}\label{prop:algorithm-stability}
Consider two arbitrary measures $\mu,\nu$, and denote $S$ as a set of $n$ unlabelled samples i.i.d. drawn from $\mu$. Suppose $h\in \cH$ is a pseudo-labeler that provides pseudo-labels for samples in $S$. Define $\hat h \in \cH$ as an ERM solution fitted to the pseudo-labels,
\begin{align}
    \hat h = \argmin_{f \in \mathcal{H}} \sum_{x\in S} \ell(f(x), h(x))
\end{align}
Then, for any $\delta \in (0,1)$, the following bound holds true with probability at least $1-\delta$,
\begin{align}\label{eq:algo-stability}
    \bigl|\eps_{\mu}(\hat h) \mathrm{-}\eps_{\nu} (h) \bigl| \leq \cO\biggl(W_p(\mu,\nu) \mathrm{+} \frac{\rho B\mathrm{+}\sqrt{\log\frac 1 \delta }}{\sqrt n}~\biggr)
\end{align}
\end{restatable}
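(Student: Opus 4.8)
The plan is to bound $|\eps_\mu(\hat h)-\eps_\nu(h)|$ by inserting intermediate quantities and controlling each gap separately. The natural decomposition is
\begin{align*}
\bigl|\eps_\mu(\hat h)-\eps_\nu(h)\bigr| \le \underbrace{\bigl|\eps_\mu(\hat h)-\widehat{\eps}_{\mu,S}(\hat h)\bigr|}_{\text{(i) generalization}} + \underbrace{\bigl|\widehat{\eps}_{\mu,S}(\hat h)-\widehat{\eps}_{\mu,S}^{\,\mathrm{pl}}(\hat h)\bigr|}_{\text{(ii) true vs.\ pseudo labels}} + \underbrace{\bigl|\widehat{\eps}_{\mu,S}^{\,\mathrm{pl}}(\hat h)-\eps_\nu(h)\bigr|}_{\text{(iii) ERM optimality + shift}},
\end{align*}
where $\widehat{\eps}_{\mu,S}$ is the empirical loss of a classifier against the (unavailable) true labels on $S$, and $\widehat{\eps}_{\mu,S}^{\,\mathrm{pl}}$ is the empirical loss against the pseudo-labels $h(x)$. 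Term (i) is a standard uniform-convergence bound: using Assumption~\ref{assum:bounded-complexity} on the Rademacher complexity of $\cH$, the $\rho$-Lipschitzness of $\ell$ (Assumption~\ref{assum:Lipschitz-loss}) to pass from $\cH$ to the loss class with a factor $\rho$, and a McDiarmid/bounded-differences argument for the $\sqrt{\log(1/\delta)/n}$ deviation term, this is $\cO\bigl((\rho B+\sqrt{\log(1/\delta)})/\sqrt n\bigr)$; the boundedness needed for McDiarmid follows from $\cY$ compact, $h$ being $R$-Lipschitz on the bounded set $\cX$ (Assumptions~\ref{assum:input-bound},~\ref{assum:Lipschitz-model}), and $\ell$ Lipschitz.

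For term (iii), the key is that $\hat h$ is the ERM \emph{against the pseudo-labels}, so $\widehat{\eps}_{\mu,S}^{\,\mathrm{pl}}(\hat h)\le \widehat{\eps}_{\mu,S}^{\,\mathrm{pl}}(h)$; but $\widehat{\eps}_{\mu,S}^{\,\mathrm{pl}}(h)=\frac1n\sum_{x\in S}\ell(h(x),h(x))$, which is the empirical "self-loss" of $h$. If $\ell(y,y)=0$ (or is uniformly small) this term essentially vanishes; more carefully, one compares $\widehat{\eps}_{\mu,S}^{\,\mathrm{pl}}(h)$ to $\eps_\mu^{\,\mathrm{pl}}(h)=\E_{\mu}\ell(h(X),h(X))$ by uniform convergence again, then relates $\eps_\mu$ (true labels) to $\eps_\nu$ using \cref{lemma:error-diff}, which contributes the $\rho\sqrt{R^2+1}\,W_p(\mu,\nu)=\cO(W_p(\mu,\nu))$ term. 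Term (ii) is the difference between evaluating $\hat h$ against true labels versus pseudo-labels on $S$: this is $\frac1n\sum_{x\in S}\bigl|\ell(\hat h(x),Y)-\ell(\hat h(x),h(x))\bigr|\le \frac{\rho}{n}\sum_{x\in S}\|Y-h(x)\|_2$ by Assumption~\ref{assum:Lipschitz-loss}, i.e.\ it is $\rho$ times the empirical $L_1$ pseudo-label error of $h$ on $\mu$ — and by uniform convergence plus \cref{lemma:error-diff}-style reasoning this is controlled by $\eps_\mu(h)$, which in turn is within $\cO(W_p(\mu,\nu))$ of $\eps_\nu(h)$.

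The main obstacle is bookkeeping the pseudo-label terms cleanly: one must avoid circularity, since $\eps_\mu(h)$ (how good the pseudo-labeler is on its \emph{own} domain's true labels) is not directly assumed small — it only enters through being close to $\eps_\nu(h)$ via \cref{lemma:error-diff}, and all the "$\eps_\mu(h)$"-flavored quantities appearing in (ii) and (iii) must ultimately be re-expressed in terms of $\eps_\nu(h)$ plus a $W_p(\mu,\nu)$ penalty so that they can be absorbed into the right-hand side rather than left as an uncontrolled additive term. Concretely, the cleanest route is: (a) show $\eps_\mu(\hat h)\le \eps_\mu(h) + \cO((\rho B+\sqrt{\log(1/\delta)})/\sqrt n)$ by combining ERM optimality against pseudo-labels with two applications of uniform convergence and the Lipschitz bound $|\ell(\hat h(x),Y)-\ell(\hat h(x),h(x))|\le\rho|Y-h(x)|$, then (b) invoke \cref{lemma:error-diff} to replace $\eps_\mu(h)$ by $\eps_\nu(h)+\rho\sqrt{R^2+1}\,W_p(\mu,\nu)$, and finally (c) handle the reverse inequality $\eps_\nu(h)-\eps_\mu(\hat h)$ symmetrically, again bounding $\eps_\nu(h)\le\eps_\mu(h)+\cO(W_p)$ and then $\eps_\mu(h)\le\eps_\mu(\hat h)+\cO((\rho B+\sqrt{\log(1/\delta)})/\sqrt n)$ via the self-loss argument. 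Collecting the $\cO(W_p(\mu,\nu))$ and $\cO((\rho B+\sqrt{\log(1/\delta)})/\sqrt n)$ contributions and a union bound over the finitely many high-probability events yields \eqref{eq:algo-stability}.
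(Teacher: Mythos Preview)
Your decomposition and the overall chain (uniform convergence, ERM optimality against pseudo-labels, then \cref{lemma:error-diff}) match the paper's proof. The difficulty is in how you pass from the empirical true-label loss of $\hat h$ to that of $h$. You propose to use the Lipschitz bound $|\ell(\hat h(x),Y)-\ell(\hat h(x),h(x))|\le\rho|Y-h(x)|$, and then claim the resulting term $\tfrac{\rho}{n}\sum_{x\in S}|Y-h(x)|$ is ``controlled by $\eps_\mu(h)$''. But the $\rho$-Lipschitz assumption only yields the \emph{opposite} inequality, $\ell(h(x),Y)\le\rho|Y-h(x)|$ (after subtracting $\ell(h(x),h(x))=0$), so you cannot upper-bound the $L_1$ label-error by the $\ell$-loss. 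Consequently your route~(a) does not deliver $\eps_\mu(\hat h)\le\eps_\mu(h)+\cO(\text{uc})$; at best it gives $\eps_\mu(\hat h)\le \rho\,\E_\mu|Y-h(X)|+\cO(\text{uc})$, and this extra term cannot be ``absorbed'' --- if you push it through \cref{lemma:error-diff} you end up with an additive $\cO(\eps_\nu(h))$ on the right-hand side, i.e.\ a multiplicative-flavored bound of the Kumar type rather than the purely additive inequality~\eqref{eq:algo-stability}.

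The paper's argument replaces your Lipschitz step with a single observation you are missing: because the pseudo-labeler $h$ itself lies in $\cH$, it achieves zero empirical pseudo-label loss (taking $\ell(y,y)=0$), so the ERM $\hat h$ also achieves zero and hence $\hat h(x)=h(x)$ for every $x\in S$. This gives $\widehat\eps_\mu(\hat h)=\widehat\eps_\mu(h)$ \emph{exactly}, with no residual $|Y-h(x)|$ term. From there two applications of uniform convergence yield $\eps_\mu(\hat h)\le\eps_\mu(h)+\cO\bigl((\rho B+\sqrt{\log(1/\delta)})/\sqrt n\bigr)$, and \cref{lemma:error-diff} finishes the job. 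Once you insert this ``$\hat h=h$ on $S$'' step in place of your term~(ii), the rest of your outline goes through.
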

\paragraph{Comparison with~\citet{kumar2020understanding}}
The setting of~\citet{kumar2020understanding} is more restrictive than ours. For example, its analysis is specific to ramp loss~\citep{huang2014ramp}, a rarely used loss function for binary classification.~\citet{kumar2020understanding} also studies the error difference over consecutive domains, and prove a multiplicative bound (in Theorem 3.2 of~\citet{kumar2020understanding}), which can be re-expressed in terms of our notations and assumptions as
\begin{align}\label{eq:kumar-multiplicative}
    \eps_{\mu}(\hat h) \leq \frac{2}{1\mathrm{-}R \Delta_\infty}\eps_{\nu}(h) \mathrm{+} \eps_\mu^* \mathrm{+} \cO\biggl(\frac{\rho B \mathrm{+} \sqrt {\log \frac 1 \delta}}{\sqrt n}\biggr)
\end{align}
where $\eps_\mu^* \triangleq \min_{f\in \cH}\eps_\mu(f)$ is the optimal error of $\cH$ in $\mu$, and $\Delta_\infty\triangleq \max_{y\in\{-1,1\}}(W_\infty(\mu(X|Y=y),\nu(X|Y=y)))$ can be seen as an analog to the $W_p(\mu,\nu)$ in \eqref{eq:algo-stability}.~\citet{kumar2020understanding} assumes $1-R\Delta_\infty>0$, thus the error $\eps_\nu(h)$ is increased by the factor $\frac{2}{1-R\Delta_\infty} > 1$ in the above error bound of $\eps_\mu(\hat h)$. This leads to a target domain error bound \textit{exponential} in $T$ (Corollary 3.3. of~\citet{kumar2020understanding}) when one applies \eqref{eq:kumar-multiplicative} to the sequence of domains iteratively in gradual self-training (i.e., Eq. \eqref{eq:gradual-ST}). In contrast, our \eqref{eq:algo-stability} indicates $\eps_{\mu}(\hat h ) \leq \eps_{\nu}(h) + \mathrm{other~terms}$, which increases the error $\eps_\nu(h)$ in an \textit{additive} way, leading to a target domain error bound \textit{linear} in $T$.

\paragraph{Remarks on Generality}
\cref{lemma:error-diff} and \cref{prop:algorithm-stability} are not restricted to gradual domain adaptation. Of independent interest, they can be leveraged as useful theoretical tools to handle distribution shifts in other machine learning problems, including unsupervised domain adaptation, transfer learning, out-of-distribution (OOD) robustness, and group fairness.

\subsection{An Online Learning View of GDA}\label{sec:online-view}

One can naively apply \cref{prop:algorithm-stability} to gradual self-training over the sequence of domains (i.e., Eq. \eqref{eq:gradual-ST}) iteratively and obtain an error bound of the target domain as
\begin{align}\label{eq:naive-gen-bound}
    \eps_{T}(h_{T}) \leq \eps_{0}(h_0) + \cO\biggl( T \Delta \mathrm{+} T\frac{\rho B\mathrm{+}\sqrt{\log\frac 1 \delta }}{\sqrt n}~\biggr)
\end{align}
Obviously, the larger $T$, the higher the error bound becomes (this holds even if one assumes $T\Delta\leq \mathrm{constant}$ for fixed source and target domains). However, this contradicts with empirical observations that a moderately large $T$ is optimal~\citep{kumar2020understanding,abnar2021gradual,chen2021gradual}.


To resolve this discrepancy, we take an online learning view of gradual domain adaptation, which allows us to obtain a more optimistic error bound. Specifically, we consider the domains $t=0,\dots,T$ arriving sequentially to the model. This process can be formalized as follows. For each domain $t=0,\dots,T$:
    \begin{enumerate}
        \item Observe unlabeled data $S_t = \{x_i^t\}_{i=1}^n$ from domain $\mu_t$.
        \item If $t=0$ (source domain), use true labels. Otherwise, generate pseudo-labels $\hat{y}_i^t = h_{t-1}(x_i^t)$ using the previous model $h_{t-1}$.
        \item Update the model: $h_t = \argmin_{f \in \mathcal{H}} \sum_{i=1}^n \ell(f(x_i^t), \hat{y}_i^t)$.
    \end{enumerate}

This online learning perspective allows us to leverage tools from sequential learning theory, particularly the framework of \citet{rakhlin2015online}, which views online binary classification as a process on a complete binary tree. By applying this view, we can utilize results on sequential Rademacher complexity (Definition \ref{def:seq-rademacher}) and the discrepancy measure between distributions (Definition \ref{def:disc}). The key advantage of this approach is that it enables us to obtain bounds that depend on the total number of samples $nT$, rather than just $T$ as in the naive approach. Specifically, terms of order $\mathcal{O}(\sqrt{1/T})$ in the naive bound become $\mathcal{O}(\sqrt{1/nT})$ in our improved bound. Moreover, this view allows us to better characterize how the error accumulates across domains, leading to the improved linear dependence on $T$ in our final bound (Theorem \ref{thm:gen-bound}), compared to the exponential dependence in previous work \citep{kumar2020understanding}.

To proceed, certain structural assumptions and complexity measures are necessary. For example, VC dimension~\citep{vapnik1999nature} and Rademacher complexity~\citep{bartlett2002rademacher} are proposed for supervised learning. Similarly, in online learning, Littlestone dimension~\citep{littlestone1988learning}, sequential covering number~\citep{rakhlin2010online} and sequential Rademacher complexity~\citep{rakhlin2010online,rakhlin2015online} are developed as useful complexity measures. To study gradual self-training in an online learning framework, we adopt the framework of~\citet{rakhlin2015online}, which views online binary classification as a process in the structure of a \textit{complete binary tree} and defines the \textit{sequential Rademacher complexity} upon that.

\begin{restatable}[Complete Binary Trees]{defi}{CompleteBinaryTree}\label{def:complete-binary-tree}
We define two complete binary trees $\mathscr X, \mathscr Y$, and the path $\bm \sigma$ in the trees:
\begin{itemize}[leftmargin=0em,align=left,noitemsep,nolistsep]
    \item[] $\mathscr X\triangleq(\mathscr X_0, ..., \mathscr X_{T})$, a sequence of mappings with $\mathscr{X}_t : \{\pm 1\}^{t} \rightarrow \cX$ for $t = 0,...,T$.
    \item[] $\mathscr Y\triangleq(\mathscr Y_0, ..., \mathscr Y_{T})$, a sequence mappings with $\mathscr{Y}_t : \{\pm 1\}^{t} \rightarrow \cY$ for $t = 0,...,T$.
    \item[] $\bm \sigma = (\sigma_0, ..., \sigma_{T}) \in \{\pm 1\}^{t} $, a path in $\mathscr{X}$ or $\mathscr{Y}$.
\end{itemize}
\end{restatable}

\begin{restatable}[Sequential Rademacher Complexity]{defi}{DefSeqRadamacher}\label{def:seq-rademacher}~
Consider $\bm \sigma$ as a sequence of Rademacher random variables and a $t$-dimensional probability vector $\mathbf{q}_t= (q_0,...,q_{t-1})$, then the sequential Rademacher complexity of $\cH$ is
\begin{align*} 
    \mathcal{R}^{\mathrm{seq}}_{t}(\mathcal H) &= \sup_{\bm{\mathscr X,\mathscr Y}} \E_{\bm\sigma}\left[ \sup_{h \in \mathcal H}\sum_{\tau=0}^{t-1} \sigma_\tau q_\tau \ell\bigl( h( { \mathscr X}_\tau (\bm{\sigma})), \mathscr Y_\tau(\bm{\sigma}) \bigr)\right] 
\end{align*}
\end{restatable}
To better understand this measure, we present examples of two common model classes\footnote{The probability vector $\mathbf{q}_t$ is taken to be uniform in these cases.}, which are provided in~\citet{rakhlin2014notes}.

\begin{example}[Linear Models] \label{example:linear-model}
For the linear model class that is $R$-Liphschtiz, i.e., $\cH = \{ x \rightarrow w^\top x: \|w\|_2 \leq R\}$, we have $\mathcal{R}^{\mathrm{seq}}_t(\mathcal H) \leq \frac{R}{\sqrt{t}}$ for $t\in \mathbb{Z}_+$.

\end{example}

\begin{example}[Neural Networks]\label{example:neural-net}
Consider $\mathcal H$ as the hypothesis class of $R$-Lipschitz $L$-layer fully-connected neural nets with $1$-Lipschitz activation function (e.g., ReLU, Sigmoid, TanH). Then, its sequential Rademacher complexity is bounded as $\mathcal{R}^{\mathrm{seq}}_t(\mathcal H) \leq \cO\left(R \sqrt{\frac{\left(\log t \right)^{3(L-1)}}{t}}\right)$ for $t\in \mathbb{Z}_+$.
\end{example}
Besides the model complexity measure, we also adopt a measure of discrepancy among multiple data distributions, which is proposed in works of online learning for time-series data~\citep{kuznetsov2014generalization,kuznetsov2015learning,kuznetsov2016time,kuznetsov2017generalization,kuznetsov2020discrepancy}.
\begin{restatable}[Discrepancy Measure]{defi}{Disc}\label{def:disc}
For any $t$-dimensional probability vector $\bq_{t} = (q_0,...,q_{t-1})$, the discrepancy measure $\disc(\bq_{t})$ is defined as 
\begin{align}\label{eq:disc-def}
    \disc(\bq_{t}) &= \sup_{h\in \cH} \left( \eps_{t-1}(h) - \sum_{\tau=0}^{t-1} q_\tau \cdot \eps_{\tau}(h) \right) 
\end{align}
\end{restatable}
Intuitively, this discrepancy measure quantifies the maximum difference between the error of a hypothesis on the last domain ($\varepsilon_{t-1}(h)$) and a weighted average of its errors on all previous domains ($\sum_{\tau=0}^{t-1} q_\tau \cdot \varepsilon_\tau(h)$). This measure captures how much the ``difficulty'' of the learning problem can change across domains. In the context of gradual domain adaptation, a small discrepancy suggests that the domains are changing gradually, making it easier for the model to adapt. Conversely, a large discrepancy indicates significant shifts between domains, which could make adaptation more challenging. The supremum over $\mathcal{H}$ in the definition ensures that we're considering the worst-case scenario across all possible hypotheses in our model class. This conservative approach helps us derive bounds that hold regardless of which specific hypothesis our learning algorithm might choose. 

We can further bound this discrepancy in our setting (defined in Sec. \ref{sec:prelim}) as follows. The proof is in \cref{supp:proof:discrepancy-bound}. 

\begin{restatable}[Discrepancy Bound]{lem}{DiscBound}\label{lemma:disc-bound}
With \cref{lemma:error-diff}, the discrepancy measure \eqref{eq:disc-def} can be upper bounded as
\begin{align}\label{eq:desc-upper-bound-1}
    \disc(\bq_{t}) &\leq \rho \sqrt{R^2 + 1} \sum_{\tau=0}^{t-1}q_\tau  (t-\tau - 1)\Delta
\end{align}
With $\bq_{t} = \bq_{t}^* = (\frac{1}{t},..., \frac{1}{t})$, this upper bound can be minimized as
\begin{align}
    \disc(\bq_{t}^*) \leq  \rho \sqrt{R^2 + 1} ~t \Delta / 2= \cO(t\Delta)
\end{align}
\end{restatable}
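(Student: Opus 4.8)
The plan is to exploit that $\bq_t$ is a probability vector, so $\sum_{\tau=0}^{t-1} q_\tau = 1$ and the quantity inside the supremum defining $\disc(\bq_t)$ can be rewritten as a $\bq_t$-weighted average of pairwise error gaps:
\[
\eps_{t-1}(h) - \sum_{\tau=0}^{t-1} q_\tau \eps_\tau(h) \;=\; \sum_{\tau=0}^{t-1} q_\tau\bigl(\eps_{t-1}(h) - \eps_\tau(h)\bigr).
\]
Each summand is controlled uniformly over $h \in \cH$ and over admissible losses by \cref{lemma:error-diff}, and the pairwise Wasserstein distances $W_p(\mu_{t-1},\mu_\tau)$ are then collapsed onto the chain of consecutive shifts $\Delta_{\tau+1},\dots,\Delta_{t-1}$ via the triangle inequality for $W_p$. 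Since the resulting estimate no longer depends on $h$, taking the supremum is free and yields \eqref{eq:desc-upper-bound-1}; the second claim follows by a one-line arithmetic evaluation at the uniform weights.

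In more detail: first bound $\sum_\tau q_\tau(\eps_{t-1}(h)-\eps_\tau(h)) \le \sum_\tau q_\tau |\eps_{t-1}(h) - \eps_\tau(h)|$, then apply \cref{lemma:error-diff} to each term to get $|\eps_{t-1}(h) - \eps_\tau(h)| \le \rho\sqrt{R^2+1}\,W_p(\mu_{t-1},\mu_\tau)$. Next, the triangle inequality for the $p$-Wasserstein metric along the path $\mu_\tau \to \mu_{\tau+1}\to\cdots\to\mu_{t-1}$ gives $W_p(\mu_{t-1},\mu_\tau) \le \sum_{s=\tau+1}^{t-1} W_p(\mu_{s-1},\mu_s) = \sum_{s=\tau+1}^{t-1}\Delta_s \le (t-\tau-1)\,\Delta$, where the last step bounds the partial sum of $t-\tau-1$ consecutive shifts by $t-\tau-1$ times the average $\Delta$ (an identity for uniformly spaced domains, the regime of primary interest; otherwise one keeps the summed form). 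Combining these, $\eps_{t-1}(h)-\sum_\tau q_\tau\eps_\tau(h) \le \rho\sqrt{R^2+1}\sum_{\tau=0}^{t-1} q_\tau (t-\tau-1)\Delta$ for \emph{every} $h$, and taking $\sup_{h\in\cH}$ preserves this $h$-free bound, establishing \eqref{eq:desc-upper-bound-1}.

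For the second statement, substitute $q_\tau = 1/t$ and evaluate the arithmetic series: $\sum_{\tau=0}^{t-1}\tfrac1t (t-\tau-1) = \tfrac1t\sum_{k=0}^{t-1} k = \tfrac{t-1}{2} \le \tfrac t2$, so $\disc(\bq_t^*) \le \rho\sqrt{R^2+1}\,t\Delta/2 = \cO(t\Delta)$. The word ``minimized'' here refers to this natural balanced choice of weights; the genuine trade-off that makes the uniform vector optimal — spreading mass out shrinks the companion sample-complexity term at the cost of a larger discrepancy — only surfaces in the full bound of \cref{thm:gen-bound}, where both terms depend on $\bq_t$.

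There is no deep obstacle here: the argument is essentially a telescoping of \cref{lemma:error-diff} along the chain of domains. The two points that need a little care are (i) noticing that after applying \cref{lemma:error-diff} the right-hand side is independent of $h$, so the supremum in the definition of $\disc$ is harmless, and (ii) the Wasserstein chaining — in particular the passage from the partial sum $\sum_{s=\tau+1}^{t-1}\Delta_s$ to the clean coefficient $(t-\tau-1)\Delta$, which is exact for equally spaced domains and is precisely the step through which the average distance $\Delta$ (equivalently, the path length $T\Delta$) enters the final generalization bound.
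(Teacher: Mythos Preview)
Your proposal is correct and follows essentially the same route as the paper: rewrite the discrepancy as a $\bq_t$-weighted sum of error gaps, bound each gap via \cref{lemma:error-diff}, chain the resulting Wasserstein distances by the triangle inequality, and evaluate the arithmetic series at the uniform weights. You are in fact more explicit than the paper about the Wasserstein chaining step and the passage from $\sum_{s=\tau+1}^{t-1}\Delta_s$ to $(t-\tau-1)\Delta$, and your observation that the uniform vector does not literally minimize the linear form \eqref{eq:desc-upper-bound-1} in isolation (the trade-off only appears once the $\|\bq\|_2$-dependent terms of \cref{thm:gen-bound} enter) is a valid caveat that the paper glosses over.
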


\subsection{Generalization Bound for Gradual Self-Training}
With our results obtained in \cref{sec:error-diff} and tools introduced in \cref{sec:online-view}, we can prove a generalization bound for gradual self-training within online learning frameworks such as~\citet{kuznetsov2016time,kuznetsov2020discrepancy}. However, if we use these frameworks in an off-the-shelf way, the resulting generalization bound will have multiple terms with dependence on $T$ and no dependence on $n$ (the number of samples per domain), since these online learning works do not care about the data size of each domain. This will cause the resulting bound to be loose in terms of $n$. To resolve this, we come up with a novel reductive view of the learning process of gradual self-training, which is more fine-grained than the original view in~\citet{kumar2020understanding}. This reductive view enables us to make the generalization bound to depend on $n$ in an intuitive way, which also tightens the final bound. We defer explanations of this view to \cref{supp:proof:gen-bound} along with the proof of \cref{thm:gen-bound}. 

Finally, we prove a generalization bound for gradual self-training that is much tighter than that of~\citet{kumar2020understanding}.
\begin{restatable}[Generalization Bound for Gradual Self-Training]{thm}{GenBound}
\label{thm:gen-bound} 
For any $\delta\in(0,1)$, the population loss of gradually self-trained classifier $h_{T}$ in the target domain is upper bounded with probability at least $1-\delta$ as
\begin{align}
 \eps_{T} (h_{T}) &\leq \sum_{t=0}^T q_t \eps_{t}(h_t) + \|\bq_{\scriptscriptstyle n(T+1)}\|_2 \left(1\mathrm{+}\cO\left(\sqrt{\log   (1/ \delta)}\right)\right)\nonumber\\
 &\qquad \mathrm{+}\disc(\bq_{\scriptscriptstyle T+1})\mathrm{+}\cO\left(\sqrt{\log T } \cR_{n(T+1)}^{\mathrm{seq}}(\ell \circ \cH)\right) \label{eq:gen-bound-general}
 \end{align}
 For the class of neural nets considered in Example \ref{example:neural-net}, 
 \begin{align}
 \eps_{T}(h_T) &\leq  \eps_{0}(h_0) +\cO\biggl(T\Delta \mathrm{+}\frac{T}{\sqrt n} \mathrm{+} T\sqrt{\frac{\log 1 / \delta}{n}}\mathrm{+}\frac{1}{\sqrt{nT}}  \mathrm{+}\sqrt{\frac{(\log nT)^{3L-2}}{nT}} \mathrm{+} \sqrt{\frac{\log 1/\delta}{nT}}~\biggr)\label{eq:gen-bound}
\end{align}
\end{restatable}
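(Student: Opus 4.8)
The plan is to split the proof into a sequential-learning argument that yields the abstract inequality \eqref{eq:gen-bound-general} and a separate chaining estimate that specializes it to \eqref{eq:gen-bound}. For the abstract bound I would treat the $T+1$ rounds of gradual self-training as an online process on non-stationary data with iterates $h_0,\dots,h_T$, and combine the sequential-learning machinery of \citet{rakhlin2015online} with the discrepancy-based generalization framework of \citet{kuznetsov2016time,kuznetsov2020discrepancy}. Step one is the discrepancy reduction: by \cref{def:disc} at index $T+1$, the target error of any hypothesis is at most a $\bq_{T+1}$-weighted average of its errors on domains $0,\dots,T$ plus $\disc(\bq_{T+1})$, which reduces the goal to controlling this weighted average for $h_T$. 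Step two applies a sequential uniform-convergence bound — valid simultaneously for all $h\in\cH$, hence usable for the data-dependent $h_T$ — to move from the population weighted loss to the empirical weighted pseudo-loss along the dependent sequence of domain samples and their pseudo-labels $h_{t-1}(x_i^t)$: the gap splits into a martingale part, bounded by an Azuma-type inequality with variance proxy $\|\bq\|_2^2$, and a worst-case-over-$\cH$ part, bounded by the sequential Rademacher complexity of $\ell\circ\cH$, which together produce the $\|\bq_{n(T+1)}\|_2(1+\cO(\sqrt{\log(1/\delta)}))$ and $\cO(\sqrt{\log T}\,\cR^{\mathrm{seq}}_{n(T+1)}(\ell\circ\cH))$ residuals. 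The key device — the ``reductive view'' the excerpt advertises as more fine-grained than \citet{kumar2020understanding} — is to unroll the process into $n(T+1)$ micro-rounds, one per sample, distributing each domain weight $q_t$ evenly across that domain's $n$ samples; this is precisely why the complexity and deviation terms are indexed by $n(T+1)$, upgrading the $1/\sqrt T$ scaling of an off-the-shelf application to $1/\sqrt{nT}$. Finally, since each $h_t$ minimizes the empirical pseudo-loss on $S_t$ against $h_{t-1}$'s pseudo-labels, its empirical pseudo-loss is competitive with that of $h_{t-1}$ (which evaluates $\ell$ near the diagonal, hence is small), and one further application of \cref{lemma:error-diff} / \cref{prop:algorithm-stability} collapses the empirical weighted pseudo-loss into $\sum_{t=0}^T q_t\eps_t(h_t)$; collecting terms gives \eqref{eq:gen-bound-general}.

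To get \eqref{eq:gen-bound} I would take $\bq=\bq^\ast$ uniform (the minimizer in \cref{lemma:disc-bound}) and substitute four ingredients: $\disc(\bq^\ast_{T+1})=\cO(T\Delta)$ by \cref{lemma:disc-bound}; $\|\bq^\ast_{n(T+1)}\|_2=\cO(1/\sqrt{nT})$ by direct computation, giving the $1/\sqrt{nT}$ and $\sqrt{\log(1/\delta)/(nT)}$ terms; $\cR^{\mathrm{seq}}_{n(T+1)}(\ell\circ\cH)=\cO(R\sqrt{(\log nT)^{3(L-1)}/(nT)})$ from \cref{example:neural-net} with the $\rho$-Lipschitz composition, so the $\sqrt{\log T}$ prefactor raises the log power to $3L-2$; and the cumulative term $\sum_{t=0}^T q^\ast_t\eps_t(h_t)\le \eps_0(h_0)+\cO(T\Delta+T/\sqrt n+T\sqrt{\log(1/\delta)/n})$, obtained by telescoping \cref{prop:algorithm-stability} along $h_0\to h_1\to\cdots\to h_t$ under a union bound over the $T$ steps at confidence $\delta/T$ and averaging over $t$ with weights $q^\ast_t$. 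Merging the two $\cO(T\Delta)$ contributions and absorbing logarithmic factors into $\log nT$ yields \eqref{eq:gen-bound}.

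I expect the main obstacle to be making the per-sample unrolling rigorous. The non-stationary time-series results one wants to reuse are stated at one-observation-per-round granularity and, applied verbatim, neither sharpen the $1/\sqrt T$ terms to $1/\sqrt{nT}$ nor expose how $n$ enters; the real work is (i) choosing a filtration in which each pseudo-label $h_{t-1}(x_i^t)$ is measurable at its micro-round, (ii) checking the one-step differences are bounded so Azuma applies with the $\|\bq_{n(T+1)}\|_2^2$ variance proxy, and (iii) verifying that the symmetrization producing the sequential Rademacher complexity of $\ell\circ\cH$ still goes through even though, within each block of $n$ micro-rounds, the pseudo-labels are all produced by a single previous hypothesis. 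A secondary, bookkeeping-heavy point is tracking the pseudo-label-to-true-label conversions carefully enough that each iterate's ERM optimality is actually exploitable — which is exactly where \cref{lemma:error-diff} and \cref{prop:algorithm-stability} are invoked a second time.
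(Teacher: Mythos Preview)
Your proposal is essentially the paper's approach: the reductive per-sample unrolling to $n(T+1)$ micro-rounds, the application of the Kuznetsov--Mohri sequential/discrepancy framework to obtain the $\|\bq_{n(T+1)}\|_2$, $\disc$, and $\cR^{\mathrm{seq}}_{n(T+1)}$ terms, and then the telescoping via \cref{lemma:error-diff} and \cref{prop:algorithm-stability} together with \cref{lemma:disc-bound} and \cref{example:neural-net} to specialize to \eqref{eq:gen-bound}. The only notable difference is presentational: the paper invokes Corollary~2 of \citet{kuznetsov2020discrepancy} as a black box (rather than re-deriving it from Azuma plus symmetrization as you sketch), and in bounding the cumulative term it works with $\sum_t q_t\,\eps_{\mu_t}(h_T)$ throughout---i.e., the true-label population loss of the \emph{final} classifier on each domain---then moves each $\eps_{\mu_t}(h_T)$ to $\eps_{\mu_T}(h_T)$ by \cref{lemma:error-diff} and telescopes $h_T\to h_0$ by \cref{prop:algorithm-stability}, bypassing the empirical pseudo-loss detour you describe.
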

\paragraph{Remark}
The bound in Eq.~\eqref{eq:gen-bound} is rather intuitive\footnote{Eq.~\eqref{eq:gen-bound} is derived from Eq.~\eqref{eq:gen-bound-general} by substituting the expression for $\cR_{n(T+1)}^{\mathrm{seq}}(\ell \circ \cH)$ from Example \ref{example:neural-net}.}: the first term $\eps_{0}(h_0)$ is the source error of the initial classifier, and $T\Delta$ corresponds to the total length of the path of intermediate domains connecting the source domain and the target domain. The asymptotic $\cO(T/\sqrt{n})$ term is due to the accumulated estimation error of the pseudo-labeling algorithm incurred at each step. The $\cO(1/\sqrt{nT})$ term characterizes the overall sample size used by the algorithm along the path, i.e., the algorithm has seen $n$ samples in each domain, and there are $T$ total domains that gradual self-training runs on.

\paragraph{Comparison with~\citet{kumar2020understanding}}
Using our notation, the generalization bound of~\citet{kumar2020understanding} can be re-expressed as
\begin{align}\label{eq:kumar-gen-bound}
    \eps_{T}(h_T) \leq e^{\cO(T)} \biggl(\eps_{0}(h_0) \mathrm{+} \cO\bigl(\frac{1}{\sqrt n } \mathrm{+} \sqrt{\frac{\log T}{n}}~\bigr)\biggr),
\end{align}
which grows \textit{exponentially} in $T$ as a multiplicative factor. In contrast, our bound \eqref{eq:gen-bound} grows only additively and linearly in $T$, achieving an \textit{exponential improvement} compared with the bound of~\citet{kumar2020understanding} shown in \eqref{eq:kumar-gen-bound}.

\begin{figure}[tb]
\begin{center}
\centerline{\includegraphics[width=0.7\columnwidth]{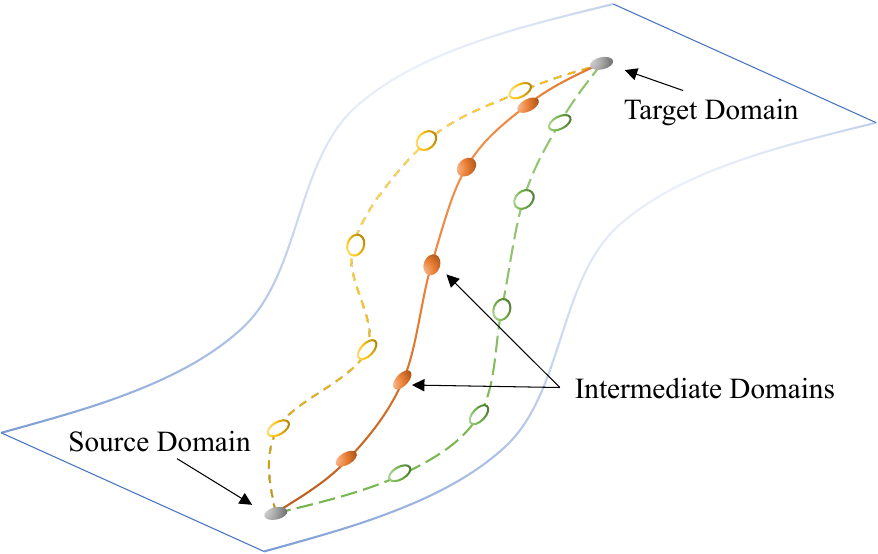}}
\caption{
An illustration of the optimal path in gradual domain adaptation, with a detailed explanation in Sec.~\ref{sec:optimal-path}. The orange path is the geodesic connecting the source domain and target domain.  
}
\label{fig:manifold-demo}
\end{center}
\end{figure}

\subsection{Optimal Path of Gradual Self-Training}\label{sec:optimal-path}
It is worth pointing out that our generalization bound in Theorem~\ref{thm:gen-bound} applies to any path connecting the source domain and target domain with $T$ steps, as long as $\mu_0$ is the source domain and $\mu_T$ is the target domain. In particular, if we define $\deltam$ to be an upper bound\footnote{Need to be large enough to ensure that $\mathcal P$ is non-empty.} on the average $W_p$ distance between any pair of consecutive domains along the path, i.e., $\Delta_{\max} \geq \frac{1}{T}\sum_{t=1}^T W_p(\mu_{t-1}, \mu_t)$, and let $\mathcal{P}$ to be the collection of paths with $T$ steps connecting $\mu_0$ and $\mu_T$:
\begin{equation*}
\mathcal{P}\defeq \{(\mu_{t})_{t=0}^T\mid \frac 1 T \sum_{t=1}^T W_p(\mu_{t-1}, \mu_t)\leq \deltam\}~,
\end{equation*}
then we can extend the generalization bound in Theorem~\ref{thm:gen-bound}:
\begin{equation}
\eps_{T}(h_T) \leq  \eps_{0}(h_0) \mathrm{+} \inf_{\mathcal{P}} \widetilde{\cO}\biggl(T\deltam \mathrm{+}\frac{T}{\sqrt n} \mathrm{+} \sqrt{\frac{1}{nT}}~\biggr)
\label{eq:simplified}
\end{equation}
Minimizing the RHS of the above upper bound w.r.t.\ $T$ (the proof is provided in \cref{supp:proof:optimal-T}), we obtain the optimal choice of $T$ on the order of
\begin{align}\label{eq:optimal-T-expression}
    \widetilde{\cO}\left(\left(\frac{1}{1 + \deltam \sqrt{n}}\right)^{2/3}\right).
\end{align}
However, the above asymptotic optimal length may not be achievable, since we need to ensure that $T\deltam$ is at least the length of the geodesic connecting the source domain and target domain. To this end, define $L$ to be the $W_p$ distance between the source domain and target domain, we thus have the optimal choice $T^*$ as
\begin{equation}
    T^* = \max\left\{\frac{L}{\deltam}, \widetilde{\cO}\left(\left(\frac{1}{1 + \deltam \sqrt{n}}\right)^{2/3}\right)\right\}.
\label{equ:optt}
\end{equation}
Intuitively, the inverse scaling of $T^*$ and $\deltam$ suggests that, if the average distance between consecutive domains is large, it is better to take fewer intermediate domains. 

\paragraph{Illustration of the Optimal Path}
To further illustrate the notion of the optimal path connecting the source domain and target domain implied by our theory, we provide an example in Fig.~\ref{fig:manifold-demo}. Consider the metric space induced by $W_p$ over all the joint distributions with finite $p$-th moment, where both the source and target could be understood as two distinct points. In this case, there are infinitely many paths of step size $T$ connecting the source and target, such that the average pairwise distance is bounded by $\deltam$. Hence, one insight we can draw from Eq.~\eqref{eq:simplified} is that: if the learner could construct the intermediate domains, then it is better to choose the path that is as close to the geodesic, i.e., the shortest path between the source and target (under $W_p$), as possible. This key observation opens a broad avenue forward toward algorithmic designs of gradual domain adaptation to \textit{construct} intermediate domains for better generalization performance in the target domain.
\section{Generative Gradual Domain Adaptation with Optimal Transport}
Inspired by the theoretical results, in this section, we present our algorithm to automatically generate a series of intermediate domains between any pair of consecutive given domains, with the hope that when applied to the sequence of generated intermediate domains, GST could lead to better target generalization. Before presenting the proposed algorithm, we first formally introduce several notions that will be used in the design of our algorithm. 

The optimal transport problem was initially formalized by \citet{monge1781memoire}, and \citet{kantorovich1939} further relaxed the deterministic nature of Monge's problem formulation. In this part, we adopt Kantorovich's formulation of optimal transport~\citep{kantorovich1939}, which aims at finding the optimal coupling that minimizes a total transport cost.
\begin{restatable}[Optimal Coupling]{defi}{OptimalTransport} Given measures $\mu,\nu$ over $\mathcal X$ and a lower semi-continuous cost function\footnote{The existence of an optimal transport plan is contingent on the cost being lower semi-continuous. See, e.g., Proposition 2.1 from \citet{villani2021topics}.} $c:\mathcal X \times \mathcal X \mapsto [0,\infty)$, the optimal transport coupling $\gamma^*$ is the one that attains the infimum of the total transport cost: 
\begin{align}
    \inf_{\gamma \in \Gamma(\mu, \nu)} \int_{\mathcal X \times \mathcal X} c(x,x') d\gamma(x,x')~.
\end{align}
where $\Gamma(\mu,\nu)$ is the set of all probability measures on $\mathcal X \times \mathcal X$ with marginals as $\mu,\nu$.
\end{restatable}
One can create a path of measures that interpolates the given two, and the theory of optimal transport can help us find the optimal path that minimizes the path length measured under the Wasserstein metric, i.e., the sum of Wasserstein distances between each pair of consecutive measures along the path. This optimal path is termed the Wasserstein geodesic, which is formally defined below.
\begin{restatable}[Wasserstein Geodesic]{defi}{Geodesic}\label{def:geodesic}
Given two measures $\nu_0,\nu_1$ over $\mathcal X$ and an optimal coupling $\gamma^\star$. Let $\sharp$ denote the push-forward operator on measures. Then, a (constant-speed) Wasserstein geodesic between $\nu_0,\nu_1$ under Euclidean metric can be defined by the path $\mathcal{P} (\nu_0,\nu_1)\coloneqq \{(g_t)_\sharp\gamma^\star : t\in[0,1] \}$, where $g_t(x,y) =  (1-t)x + t y$.
\end{restatable}

\subsection{Motivations}

The target domain error bound of gradual self-training, i.e., Eq. \eqref{eq:gen-bound}, has a dominant term $T\Delta$, which can be interpreted as the length of the path of intermediate domains connecting the source and target. 
Interestingly, we find that this path is related to the \textbf{Wasserstein geodesic} between the source $\mu_0$ and target $\mu_T$, and we formalize our findings as follows. 

\begin{restatable}[Path Length of Intermediate Domains]{prop}{DomainPath}\label{prop:domain-path}
For arbitrary intermediate domains $\mu_1,\dots,\mu_{T-1}$, the following inequality holds,
\begin{align}\label{eq:prop:domain-path}
T\Delta = \sum_{t=1}^{T} W_p(\mu_{t-1}, \mu_{t}) \geq W_p(\mu_{0}, \mu_{T}),
\end{align}
where the equality is obtained if and only if the intermediate domains $\mu_1,\dots,\mu_{T-1}$ sequentially fall along the Wasserstein geodesic between $\mu_0$ and $\mu_T$.
\end{restatable}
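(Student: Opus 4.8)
The plan is to establish \eqref{eq:prop:domain-path} as a straightforward consequence of two facts: (i) the triangle inequality for the $p$-Wasserstein metric $W_p$ (which holds for every $p \geq 1$, since $W_p$ is a genuine metric on the space of probability measures with finite $p$-th moment over the compact space $\cX \times \cY$), and (ii) a characterization of equality in the iterated triangle inequality in terms of geodesics in the metric space $(\,\cdot\,, W_p)$. The inequality part is immediate: by applying the triangle inequality $T-1$ times to the chain $\mu_0, \mu_1, \dots, \mu_T$, one gets
\begin{align*}
W_p(\mu_0, \mu_T) \leq \sum_{t=1}^{T} W_p(\mu_{t-1}, \mu_t) = T\Delta.
\end{align*}
So the only real content is the equality characterization.

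For the equality case, I would argue as follows. First recall the general metric-space fact: in any metric space, if $x_0, x_1, \dots, x_T$ satisfy $\sum_{t=1}^T d(x_{t-1}, x_t) = d(x_0, x_T)$, then each partial sum satisfies $d(x_0, x_t) + d(x_t, x_T) = d(x_0, x_T)$, i.e.\ every $x_t$ lies ``metrically between'' $x_0$ and $x_T$; moreover the points are ordered, with $d(x_0, x_t)$ nondecreasing in $t$. The converse — that any configuration of points lying in order on a geodesic achieves equality — is likewise elementary from the constant-speed parametrization. So the statement reduces to: the points of the metric space $(\mathcal{W}_p, W_p)$ lying between $\mu_0$ and $\mu_T$ are exactly the points on a Wasserstein geodesic connecting them, in the sense of \Cref{def:geodesic}. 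This is the standard structure theorem for geodesics in Wasserstein space: $(\mathcal{W}_p(\cX\times\cY), W_p)$ is a geodesic space whose constant-speed geodesics from $\mu_0$ to $\mu_T$ are precisely the curves $t \mapsto (g_t)_\sharp \gamma^\star$ for $\gamma^\star$ an optimal coupling and $g_t$ the displacement interpolation map (see, e.g., \citet{villani2009optimal} or \citet{ambrosio2005gradient}). Invoking this, ``$\mu_t$ lies between $\mu_0$ and $\mu_T$ for all $t$, in increasing order'' is equivalent to ``$\mu_1, \dots, \mu_{T-1}$ fall sequentially along a Wasserstein geodesic between $\mu_0$ and $\mu_T$,'' which is exactly the claimed condition.

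The main obstacle is not the inequality — that is a one-line triangle-inequality argument — but getting the equality characterization stated cleanly. One has to be a little careful about two points: (a) the chain $\mu_0,\dots,\mu_T$ is over $\cX \times \cY$, so one should be explicit that $W_p$ is computed with the product metric on $\cX \times \cY$ (this is consistent with \Cref{def:geodesic}, which states geodesics under the Euclidean metric, and with the ambient setup where each $\mu_t$ is a joint distribution); and (b) ``falling along the geodesic'' should be read as: there exists an optimal coupling $\gamma^\star \in \Gamma(\mu_0, \mu_T)$ and times $0 = t_0 \leq t_1 \leq \dots \leq t_T = 1$ with $\mu_t = (g_{t_t})_\sharp \gamma^\star$. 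If $W_p(\mu_0,\mu_T) = 0$ the statement is trivial, so one may assume $L > 0$ and normalize. I expect the cleanest write-up to separate the two directions: ($\Leftarrow$) plug the displacement-interpolation formula into $W_p(\mu_{t-1},\mu_t)$, bound it by $|t_t - t_{t-1}| \cdot W_p(\mu_0,\mu_T)$ using that $(g_t)_\sharp\gamma^\star$ is a constant-speed geodesic, and sum the telescoping weights to get exactly $W_p(\mu_0,\mu_T)$; ($\Rightarrow$) use the between-ness consequence of equality plus the Wasserstein geodesic structure theorem to produce the interpolation representation. That second direction — invoking the structure of $W_p$-geodesics — is the step that does real work, but it is a citable classical result, so no new argument is needed.
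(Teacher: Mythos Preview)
Your proposal is correct and follows the same approach as the paper: the inequality is the iterated triangle inequality for $W_p$, and the equality case is identified with lying on a Wasserstein geodesic. The paper's own argument is only a two-sentence sketch (triangle inequality plus ``the geodesic is defined as the shortest path''), so your treatment of the equality direction---separating ($\Leftarrow$) via constant-speed interpolation and ($\Rightarrow$) via between-ness plus the displacement-interpolation structure theorem---is considerably more careful than what the paper actually provides, but the underlying strategy is identical.
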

Without explicit access to the intermediate domains, gradual domain adaptation cannot be applied. Interestingly, \Cref{prop:domain-path} sheds light on the task of intermediate domain generation to bridge this gap: \textit{the generated intermediate domains should fall on or close to the Wasserstein geodesic in order to minimize the path length.}

Note that in GDA, we cannot directly measure $\Delta$ since it requires access to the joint distributions of the intermediate domains, whereas only unlabeled data are available to us. In order to bridge the gap, in this paper, we make the following assumption of the intermediate domains.
\begin{restatable}[Feature Space]{as}{FeatureSpace}\label{assum:Feature-Space}
      There exists a feature space $\cZ$ such that the covariate shift assumption holds over $\cZ$. Specifically, the conditional distribution of $Y$ given the feature $Z$ is invariant across all the intermediate domains, i.e., for any two domains $i$ and $j$ with $i\neq j$, $\mu_i(Y|Z)=\mu_j(Y|Z)$. 
\end{restatable}
Note that covariate shift is one of the most common assumptions in the literature of domain adaptation~\citep{ben2007analysis,adel2017unsupervised,arjovsky2019invariant,redko2019optimal,zhao2019deep,rosenfeld2020risks,wang2022isr}. It is one way to ensure that the knowledge contained in different domains are inherently relevant such that the success of domain adaptation is possible~\citep{zhang2013domain}. It has been widely applied in various applications, including computer vision~\citep{adel2017unsupervised,arjovsky2019invariant,redko2019optimal,zhao2019deep}, natural language processing~\citep{ash2016unsupervised}, and robot control~\citep{akiyama2010efficient,Sugiyama2013LearningUN}.
Under this assumption, the Wasserstein distance between the joint distance $W_p\left(\mu_{t-1}(Z,Y), ~\mu_{t}(Z,Y)\right)$ reduces to the one between the marginal feature distribution $W_p\left(\mu_{t-1}(Z), ~\mu_{t}(Z)\right)$.

\subsection{Computation with Optimal Transport}\label{sec:algo:analysis}
From \Cref{def:geodesic}, we know that one has to solve an optimal transport problem to generate intermediate domains along the Wasserstein geodesic. As a first step, we consider the optimal transport between a source domain and a target domain.

\paragraph{Solve Optimal Transport with Linear Programming} In unsupervised domain adaptation (UDA), the source and target domains have finite training data. Hence, we can consider the measures of the source and target to be discrete, i.e., $\mu_0$ and $\mu_T$ only have probability mass over the finite training data points. More formally, denoting the source dataset as $S_0 = \{x_{0i}\}_{i=1}^{m}$ and target dataset as $S_T=\{x_{Tj}\}_{i=1}^{n}$, the empirical measures $\mu_0$ and $\mu_T$ can be expressed as
\begin{align}
    \mu_0 = \frac{1}{m}\sum_{i=1}^{m} \delta(x_{0i})~, \quad \mu_T = \frac{1}{n}\sum_{j=1}^{n} \delta(x_{Tj}),
\end{align}
where $\delta(x)$ represents the Dirac delta distribution at $x$~\citep{dirac1930principles}. Under the discrete case, the push-forward operator $\mathcal{T}^*$ that pushes $\mu_0$ forward to $\mu_T$ can be obtained by solving a linear program~\citep{peyre2019computational}.

\begin{restatable}{prop}{LP}\label{prop:LP}
Consider $\mu_0$ over source data $\{x_{0i}\}_{i=1}^{m}$ and $\mu_T$ over target data $\{x_{Tj}\}_{i=1}^{n}$. Given a transport cost function $c:\mathcal X \times \mathcal X \mapsto [0,\infty)$, there exists a randomized optimal transport map (induced from the optimal coupling $\gamma^*$), $\mathcal{T}^*$, which satisfies $\cT^*_\sharp\mu_0 = \mu_T$. Furthermore, for $i\in[m]$, $\cT^*$ maps $x_{0i}$ as follows,
\begin{align}
    \cT^*_\sharp\delta(x_{0i}) = \sum_{j=1}^n \gamma^*_{ij} \delta(x_{Tj}),
\end{align}
where $\gamma^*\in\mathbb{R}_{\geq 0 }^{m \times n}$ is the optimal transport plan, a non-negative matrix of dimension $m\times n$. The plan $\gamma^*$ can be obtained by solving the following linear program,
\begin{align}\label{eq:opt-plan}
\gamma^{*}=\argmin _{\gamma \in \mathbb{R}_{\geq 0}^{m \times n}} \sum_{i, j} \gamma_{i, j} c(x_{0i},x_{Tj})\\
\text { s.t. }~ \gamma \bm{1}_{n}= \frac{1}{m} \bm{1}_{m}~\text{ and }~ \gamma^{T} \bm{1}_m = \frac{1}{n}\bm{1}_n\nonumber
\end{align}
\end{restatable}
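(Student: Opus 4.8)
The plan is to exploit the fact that $\mu_0$ and $\mu_T$ are finitely supported, so the Kantorovich problem $\inf_{\gamma\in\Gamma(\mu_0,\mu_T)}\int c\,\mathrm d\gamma$ collapses to a finite-dimensional linear program, and then to read the randomized map off its solution. First I would observe that any coupling $\gamma\in\Gamma(\mu_0,\mu_T)$ is supported on the finite set $\{x_{0i}\}_{i=1}^m\times\{x_{Tj}\}_{j=1}^n$ — a measure whose marginals are carried by finite sets is itself carried by the product of those sets — so $\gamma$ is completely determined by the matrix of masses $\gamma_{ij}\triangleq\gamma(\{x_{0i}\}\times\{x_{Tj}\})\geq 0$. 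The marginal conditions $\pi^1_\sharp\gamma=\mu_0$ and $\pi^2_\sharp\gamma=\mu_T$ translate exactly into the row- and column-sum constraints $\gamma\bm 1_n=\frac1m\bm 1_m$ and $\gamma^\top\bm 1_m=\frac1n\bm 1_n$, and the transport cost becomes $\int c\,\mathrm d\gamma=\sum_{i,j}\gamma_{ij}\,c(x_{0i},x_{Tj})$; hence minimizing over $\Gamma(\mu_0,\mu_T)$ is literally the LP \eqref{eq:opt-plan}.

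Next I would establish existence of a minimizer $\gamma^*$: the feasible set is nonempty (it contains the independent coupling $\gamma_{ij}=\frac{1}{mn}$), closed and bounded in $\mathbb R_{\geq0}^{m\times n}$ — the transportation polytope — and the objective is linear, hence continuous, so the infimum is attained by Weierstrass. (Lower semi-continuity of $c$, invoked in general for the Optimal Coupling definition, is automatic here since the relevant domain is finite.) Then I would define the randomized map $\mathcal T^*$ as the Markov kernel sending $x_{0i}$ to $x_{Tj}$ with probability $m\,\gamma^*_{ij}$, which is a valid distribution over $j$ because $\sum_j\gamma^*_{ij}=\frac1m$ by the row constraint; a deterministic Monge map need not exist (e.g.\ when $m\neq n$), which is precisely why a kernel is used. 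Pushing $\mu_0$ through this kernel places mass $\sum_i\mu_0(\{x_{0i}\})\,m\gamma^*_{ij}=\sum_i\frac1m\cdot m\gamma^*_{ij}=\sum_i\gamma^*_{ij}=\frac1n$ on each $x_{Tj}$ by the column constraint, so $\mathcal T^*_\sharp\mu_0=\mu_T$; restricting to the atom of mass $\frac1m$ at $x_{0i}$ yields $\mathcal T^*_\sharp\delta(x_{0i})=\sum_j\gamma^*_{ij}\delta(x_{Tj})$ as claimed. Finally, the coupling induced by $\mathcal T^*$ — the joint law of $(X,\mathcal T^*(X))$ with $X\sim\mu_0$ — has masses $\mu_0(\{x_{0i}\})\cdot m\gamma^*_{ij}=\gamma^*_{ij}$, i.e.\ it equals $\gamma^*$, which minimizes the Kantorovich cost; hence $\mathcal T^*$ is an optimal randomized transport map.

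I do not expect a genuine obstacle: the statement is classical. The only points requiring care are (i) the honest justification that couplings between finitely supported measures are in bijection with the matrices appearing in \eqref{eq:opt-plan}, so that the infimum over $\Gamma(\mu_0,\mu_T)$ really is the stated LP, and (ii) being consistent about the normalization convention in $\mathcal T^*_\sharp\delta(x_{0i})$, i.e.\ reading $\delta(x_{0i})$ as the sub-probability atom of mass $\frac1m$ carried by $\mu_0$ rather than a unit point mass; with that reading, the asserted identity is exactly the $i$-th row of $\gamma^*$, and the normalizing factor $m$ in the kernel $x_{0i}\mapsto x_{Tj}$ with probability $m\gamma^*_{ij}$ is what reconciles the two viewpoints.
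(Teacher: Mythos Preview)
Your proposal is correct and self-contained: you reduce the Kantorovich problem to the finite LP by identifying couplings with matrices in the transportation polytope, obtain existence via compactness, and read off the Markov kernel $\mathcal T^*$ from the rows of $\gamma^*$; you also correctly flag and resolve the normalization ambiguity in the displayed formula for $\mathcal T^*_\sharp\delta(x_{0i})$.

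The paper itself does not prove this proposition at all --- it simply remarks that this is the classical Kantorovich LP and refers the reader to~\citet{peyre2019computational} for details. So your argument is not a different route so much as an actual proof where the paper gives only a citation; what you wrote is essentially the standard textbook derivation one would find in that reference.
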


\paragraph{Generating Intermediate Domains with Optimal Transport} \Cref{prop:LP} demonstrates that one can use linear programming (LP) to solve the optimal transport problem between a source dataset and a target dataset. With the optimal transport plan $\gamma^*$, one can leverage \Cref{def:geodesic} to generate intermediate domains along the Wasserstein geodesic. Specifically, for $t=1,\dots,T-1$, the measure of the intermediate domain $t$ can be obtained by the following push-forward
\begin{align}\label{eq:inter-domain-pushforward}
    \mu_{t}\mathrm{=}\left( \frac{T\mathrm{-}t}{T} \Id + \frac{t}{T} \cT^*\right)_\sharp \mu_0 \mathrm{=} \frac 1 m \sum_{i,j} \gamma^*_{ij} \delta \left(\frac{T\mathrm{-}t}{T} x_{0i} \mathrm{+} \frac{t}{T} x_{Tj}\right)
\end{align}
Intuitively, $\mu_t$ can be interpreted as a discrete measure over $n_{\gamma^*}$ data points with data weights assigned by $\gamma^*_{ij}$, where $n_{\gamma^*} \coloneqq \sum_{i,j} \mathds{1}[\gamma_{ij} > 0]$ is the number non-zero entries in the matrix $\gamma^*$.

\paragraph{Space Complexity} Clearly, one needs to store the optimal transport plan matrix $\gamma^*\in \bR_{\geq 0}^{m \times n}$, in order to generate intermediate domains with \eqref{eq:inter-domain-pushforward}. Thus, the space complexity appears to be $\cO(mn)$. However, by leveraging the theory of linear programming, one can show that the maximum number of non-zero elements of the solution $\gamma^*$ to the LP \eqref{eq:opt-plan} is at most $m+n-1$~\citep{peyre2019computational}. Thus, the space complexity can be reduced to $\cO(m+n)$ when using a sparse matrix format to store $\gamma^*$.

\paragraph{Time Complexity} For simplicity, let us consider $m=n$. Then, the time complexity of solving the LP \eqref{eq:opt-plan} is known to be $O(n^3\log(n))$~\citep{cuturi2013sinkhorn,pele2009fast}.


\subsection{Proposed Algorithm}
\begin{algorithm}[t!]

\caption{Generative Gradual Domain Adaptation with Optimal Transport (GOAT)}\label{algo:main}
\begin{algorithmic}
\REQUIRE {$S_0^X=\{x_{0i}\}_{i=1}^{m}$, $S_T^X=\{x_{Ti}\}_{i=1}^{n}$; Encoder $\mathcal{E}$; Source-trained classifier $h_0$}

\noindent\STATE \underline{\textsc{Encode:}} $S_0^Z\mathrm{=} \{z_{0i} \mathrm{=} \mathcal E (x_{0i})\}_{i=1}^m,S_T^Z \mathrm{=} \{z_{Tj} \mathrm{=} \mathcal E (x_{Tj})\}_{j=1}^n$

\STATE \underline{\textsc{Optimal Transport (OT):}} Solve for the OT plan $\gamma^{*}\in \mathbb{R}_{\geq 0}^{m \times n}$ between $S_0^Z$ and $S_T^Z$

\STATE \underline{\textsc{Cutoff:}} Use a cutoff threshold to keep $\cO(n\mathrm{+}m)$ elements of $\gamma^*$ above the threshold and zero out the rest \textrm{\texttt{\small //Only applies to the entropy-regularized version of OT}}

\STATE \underline{\textsc{Intermediate Domain Generation:}}

\FOR{$t = 1, \dots, T$}
    \STATE Initialize an empty set $S_t^Z$
    \FOR{each non-zero element $\gamma^*_{ij}$ of $\gamma^*$}
    
    \STATE $z \gets \frac{T-t}{T} z_{0i} + \frac{t}{T} z_{Tj}$ 
    \STATE Add $(z,\gamma^*_{ij})$ to $S_t$
    \ENDFOR
\ENDFOR

\STATE \underline{\textit{\textsc{Gradual Domain Adaptation:}}}
\FOR{$t = 1, \dots, T$}

    \STATE $h_t\mathrm{=}\mathrm{ST}(h_{t-1},S_t)$ \texttt{\small //Can also apply sample weights to losses based on $\gamma^*_{ij}$ }
\ENDFOR
\OUTPUT {Target-adapted classifier $h_T$}
\end{algorithmic}
\end{algorithm}

We present our proposed algorithm in Algorithm \ref{algo:main}. Notice that Algorithm \ref{algo:main} directly generates intermediate domains between the source and target domains. However, in practice, there might be a few given intermediate domains that can be used by GDA. In this case, one can simply treat each pair of consecutive domains as a source-target domain pair, and apply Algorithm \ref{algo:main} iteratively to the pairs of consecutive given domains from the source to target. 

Next, we explain the key designs of the proposed algorithm.

\subsubsection{Fast Computation of Optimal Transport (OT)} 
The super-cubic time complexity of solving the LP in~\eqref{eq:opt-plan} essentially prevents this optimal transport approach from being scaled up to large datasets. To remedy this issue, we propose to solve an approximate objective of the OT problem~\eqref{eq:opt-plan} when it takes too long to solve the original OT exactly. Specifically, we add an entropic regularization term to the objective~\eqref{eq:opt-plan}, turning it to be strictly convex, and the time complexity of solving this regularized objective is reduced to nearly $\cO(n^2)$ from the original $\cO(n^3 \log n)$~\citep{cuturi2013sinkhorn,dvurechensky2018computational}. However, the solution to this regularized objective, i.e., the OT plan $\gamma^*$, is not guaranteed to have at most $n+m-1$ elements anymore. Thus, the space complexity increases to $\cO(mn)$ from $\cO(m+n)$. In light of this challenge, we design a cutoff trick to zero out entries of tiny magnitude in $\gamma^*$ (see details in Algo.~\ref{algo:main}), reducing the space complexity back to $\cO(m+n)$. More details regarding this part are provided in Appendix~\ref{supp:algo}.

Note that beyond the Sinkhorn algorithm, several alternative approaches enhance the efficiency of OT computation. For instance, the Greenkhorn algorithm~\citep{altschuler2017near} improves the performance of the Sinkhorn algorithm, with a complexity of $\tilde{\cO}(n^2/\varepsilon^2)$, where $\varepsilon$ is the desired accuracy. Additionally, Low-rank Optimal Transport (LOT)~\citep{forrow2019statistical,scetbon2021low,scetbon2022linear} approaches the problem by reducing the size of measures before solving OT. This method specifically seeks couplings of low rank, which significantly reduces computational demands. Another approach, sliced-Wasserstein distance~\citep{bonneel2015sliced,kolouri2019generalized}, involves computing linear projection of high-dimensional distributions to one-dimensional distributions, then averaging the resulting Wasserstein distances, which can be computed using closed-form formulas. Given the varied applicability and use cases of these methods, we recommend that practitioners select the OT algorithm that best suits their specific needs.

\begin{figure}[t!]
    \centering
    \begin{subfigure}[t]{.495\linewidth}
    \includegraphics[width=0.85\linewidth]{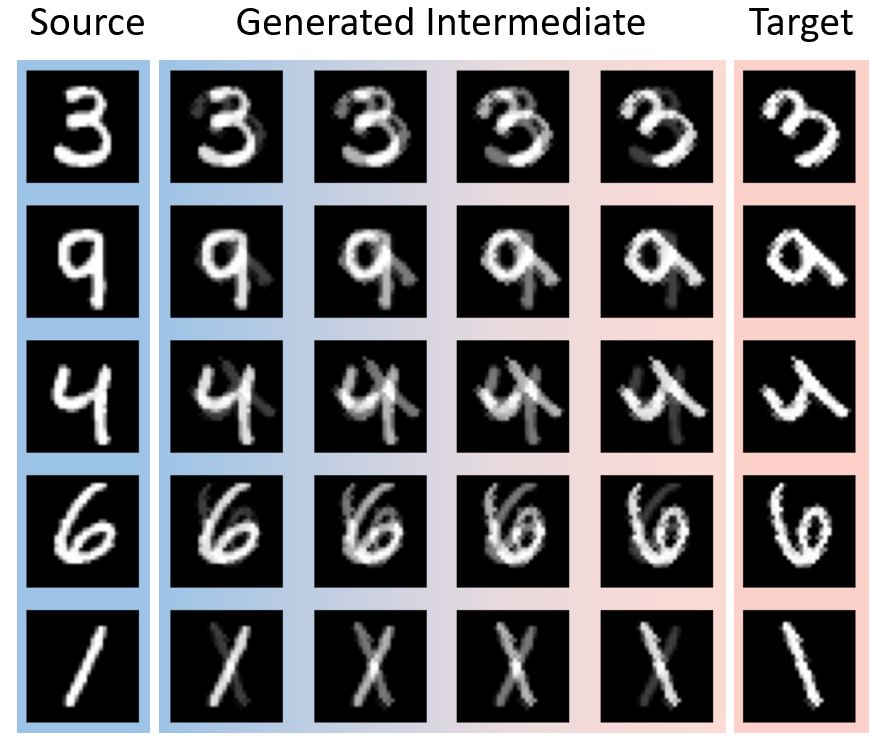}
    \caption{Input-space generation.}
    \label{fig:og_mnist}
  \end{subfigure}
\begin{subfigure}[t]{.495\linewidth}
    \centering\includegraphics[width=0.85\linewidth]{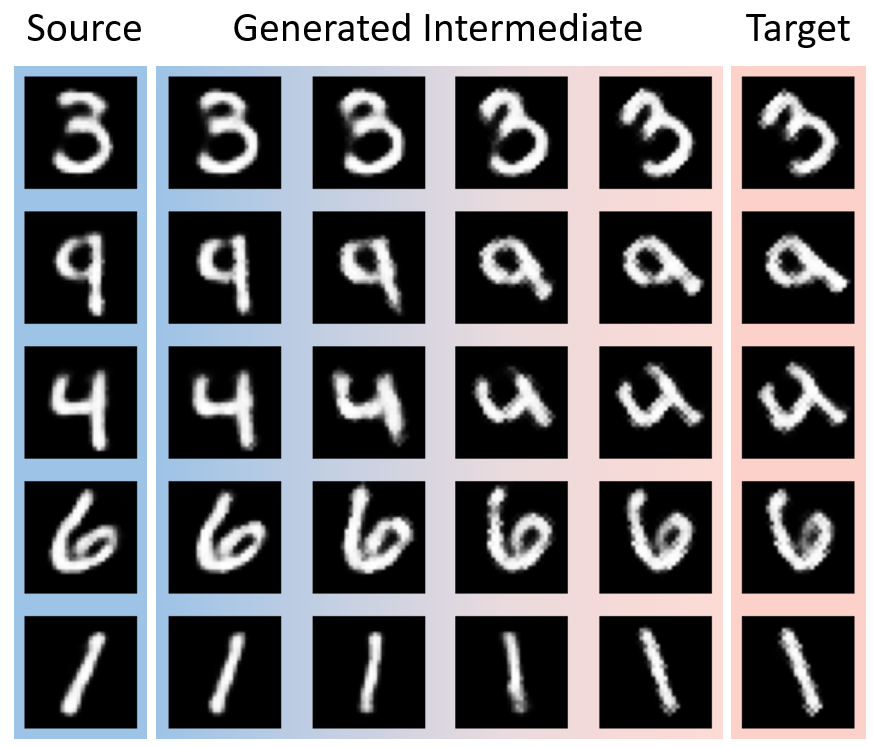}
    \caption{Feature-space generation.}
    \label{fig:latent_mnist}
  \end{subfigure}
    \caption{Samples from generated intermediate domains.}
    \label{fig:mnist}
\end{figure}

\begin{figure*}[t!]
    \centering
    \includegraphics[width=0.9\textwidth]{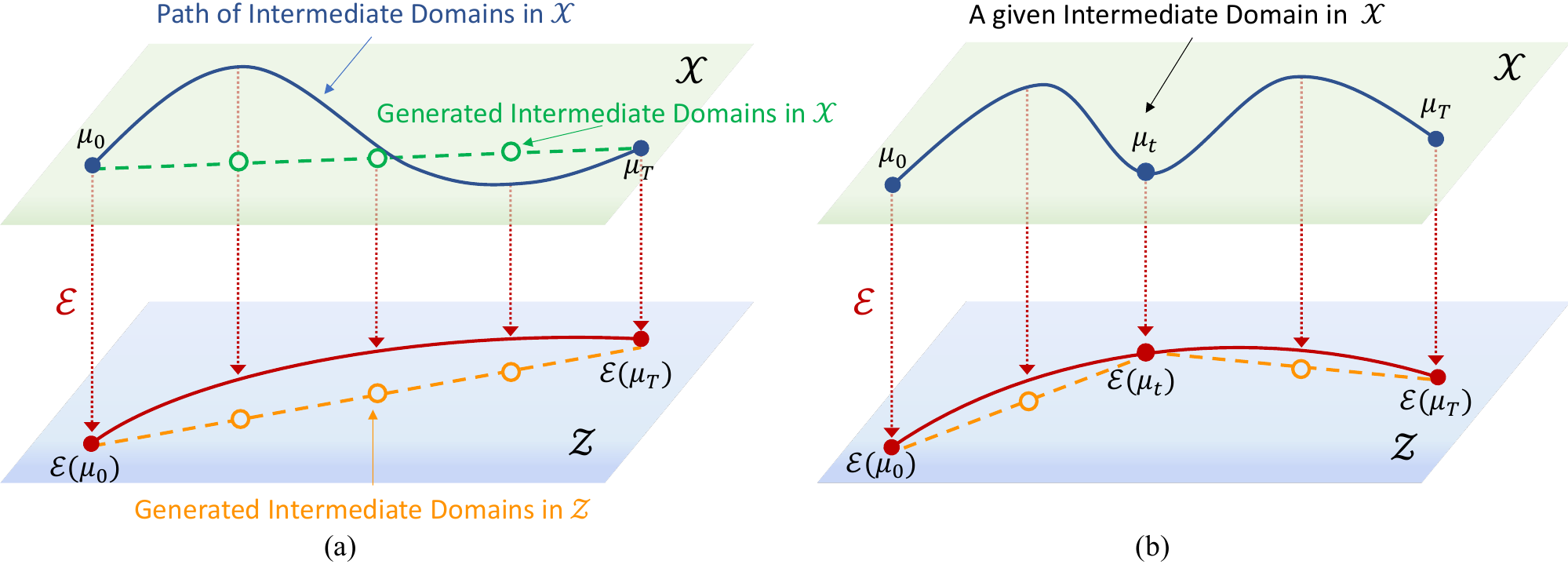}
    \caption{\small Illustration of the intermediate domain generation in GOAT. (a) without any given intermediate domain,
    (b) with one given intermediate domain.
    }
    \label{fig:manifold}
\end{figure*}

\subsubsection{Intermediate Domain Generation in Feature Space}
The intermediate domain generation approach described above directly generates data in the input space $\cX$. However, the generation does not have to be restricted to the input space. One can show that with a Lipschitz continuous encoder $\cE: \cX \mapsto \cZ$ mapping inputs to the feature space $\mathcal Z$ (i.e., $z \gets \cE(x)$ for any input $x$), the order of the generation bound \eqref{eq:gen-bound} stays the same\footnote{Some terms in the bound get multiplied by a factor of the Lipschitz constant of $\cE$.} (the proof is in Appendix \ref{supp:theory}). 

\paragraph{Feature Space vs. Input Space}
We use an encoder by default in Algorithm \ref{algo:main}, since we empirically observe that directly generating intermediate domains in the input space is usually sub-optimal (see~\Cref{fig:ablation:encoder} for a detailed analysis). To give the readers an intuitive understanding, we provide a demo of Rotated MNIST in~\Cref{fig:mnist}: if we apply the intermediate domain generation of~\Cref{algo:main} in the input space, the generated data do not approximate the digit rotation well; when applying the algorithm in the latent space of a VAE (fitted to the source and target data), the generated data (obtained by the decoder of the VAE) captures the digit rotation accurately. 

\Cref{fig:manifold}a explains this superiority of the feature space over the input space with a schematic diagram: the input-space Wasserstein geodesic can not well approximate the ground-truth distribution shift (e.g., rotation) due to the linearity of push-forward operators under the Euclidean metric; with a proper encoder $\cE$, the feature-space Wasserstein geodesic can capture the distribution shift more accurately.  

\paragraph{Leveraging the Given Intermediate Domain(s)}
With a given intermediate domain, we generate intermediate domains with GOAT between the two pairs of consecutive domains, respectively. \Cref{fig:manifold}b shows that this approach can make the generated domains closer to the ground-truth path of distribution shift, explaining why GOAT benefits from given intermediate domains.

\paragraph{Gradual Domain Adaptation (GDA) on Generated Intermediate Domains}
With the generated data of intermediate domains, one can run the GDA algorithm consecutively over the source-intermediate-target domains in the feature space. As for the choice of GDA algorithm, we adopt Gradual Self-Training (GST)~\citep{kumar2020understanding}, mainly due to its simplicity. Nevertheless, one can freely apply any other GDA algorithm on top of the generated domains.

\section{Experiments}\label{sec:exp}
Our goal of the experiment is to demonstrate the performance gain of training on generated intermediate domains in addition to given domains. We compare our method with gradual self-training~\citep{kumar2020understanding}, which only self-trains a model along the sequence of given domains iteratively. In Sec.~\ref{sec:exp:ablation}, we further analyze the choices of encoder $\mathcal E$ and transport plan $\gamma^*$ used by Algorithm \ref{algo:main}. More details of our experiments are provided in Appendix \ref{supp:exp}.

\subsection{Datasets}

\paragraph{Rotated MNIST}
A semi-synthetic dataset built on the MNIST dataset~\citep{mnist}, with 50K images as the source domain and the same 50K images rotated by 45 degrees as the target domain. Intermediate domains are evenly distributed between the source and target.

\paragraph{Color-Shift MNIST}
We normalize the pixel values of MNIST to be in $\left[0,1\right]$. We use 50K images as the source domain and the same 50K images with pixel values shifted by 1 as the target domain, i.e., the target pixel values are shifted to be in $\left[1,2\right]$. Intermediate domains are also evenly distributed.

\paragraph{Portraits~\citep{portraits}}
A real-world gender classification image dataset consisting of portraits of high school seniors from 1905 to 2013. Following~\citet{kumar2020understanding}, the dataset is sorted chronologically and split into a source domain (first 2000 images), 7 intermediate domains (next 14000 images), and a target domain (last 2000 images).

\paragraph{Cover Type~\citep{CoverType}}
A tabular dataset aiming at predicting the forest cover type at certain locations given 54 features. Following~\citet{kumar2020understanding}, we sort the data by the distance to water body in ascending order, splitting the data into a source domain (first 50K data), 10 intermediate domains (each with 40K data) and a target domain (final 50K data). 

\subsection{Implementation}

Our code is built in PyTorch~\citep{pytorch}, and our experiments are run on NVIDIA RTX A6000 GPUs. For Rotated MNIST, Color-Shift MNIST and Portraits, we use a convolutional neural network (CNN) of 4 convolutional layers of 32 channels followed by 3 fully-connected layers of 1024 hidden neurons, with ReLU activation. For Cover Type, we use a multi-layer perceptron (MLP) of 3 hidden layers with 256 hidden neurons. We also adopt common practices of Adam optimizer~\citep{adam}, Dropout~\citep{dropout}, and BatchNorm~\citep{batchnorm}.
To calculate the optimal transport plan between the source and target, we use the Earth Mover Distance solver from~\citep{pythonot}. 
The number of generated intermediate domains is a hyperparameter, and we show the performance for 1,2,3 or 4 generated domains between each pair of consecutive given domains. 
Following practices~\citep{kumar2020understanding}, in self-training, we filter out the 10\% data where the model's prediction is least confident at. 

When implementing of GOAT (\Cref{algo:main}), we take the first two conv layers as the encoder $\mathcal E$, and treat the layers after them as the classifier $h$. Sec. \ref{sec:exp:ablation} explains this choice.

\paragraph{Notes on number of generated domain} Although Eq. \eqref{equ:optt} shows the relationship between the optimal number of domains and source-target distance, it is still unclear what exact number should be chosen. To solve the problem, one can use a heuristic hyperparameter tuning approach. Specifically, a subset of the target set with highly confident pseudo-labels can be used as a validation set. Then, with all other components of the algorithm fixed, one can evaluate the performance using different numbers of domains on the target validation set and select the (empirically) optimal number of intermediate domains. However, as subsequent sections will demonstrate, the hyperparameter tuning stage is generally not necessary for the success of GOAT. Instead, our findings indicate that GOAT's performance is robust across varying numbers of domains. 

\subsection{Empirical Results}

\paragraph{Comparison with UDA methods}~We first empirically validate our claim that the traditional one-off UDA methods do not work well on datasets with large distribution shifts. Here, we compare GDA methods with three popular UDA methods: DANN~\citep{ganin2016domain}, DeepCoral~\citep{sun2016deep} and DeepJDOT~\citep{damodaran2018deepjdot}. These UDA methods do not have mechanisms to incorporate additional unlabeled data during training, so we use the source and target data as in the conventional UDA framework. In contrast, GDA methods such as GST and our GOAT have the capability to incorporate intermediate domains with unlabeled data. For illustration, we use two given intermediate domains for both GDA algorithms. It is important to note that under this setting, the amount of labeled data used in UDA and GDA is identical. We report the comparison in \Cref{Tab:uda}. The results demonstrate the advantage of the GDA methods over traditional UDA approaches, as GDA methods consistently outperform UDA methods with various types of distribution shifts. GOAT further improves the performance on top of GST, with a detailed discussion in subsequent paragraphs.

\begin{table}[t!]
\centering
\small
\caption{\small{GDA methods outperform one-off UDA methods on datasets with large distribution shifts.}}
\scalebox{0.9}{
    \begin{tabular}{ccccc}
    \toprule
     & Rotated MNIST & Color-Shift MNIST & Portraits & Cover Type \\
    \midrule
    DANN~\citep{ganin2016domain} & 44.6$\pm$2.3 & 56.5$\pm$3.2 & 73.8$\pm$1.5 & 63.3$\pm$1.6\\
    DeepCoral~\citep{sun2016deep} & 49.6$\pm$1.8 & 63.5$\pm$2.1 & 71.9$\pm$1.3 & 66.8$\pm$1.5 \\
    DeepJDOT~\citep{damodaran2018deepjdot} & 51.6$\pm$2.1 & 65.8$\pm$2.7 & 72.5$\pm$1.3 & 67.0$\pm$1.2 \\
    \midrule 
    GST (2 given domains) & 61.6$\pm$2.1 & 67.6$\pm$4.8 & 77.0$\pm$1.3 & 66.9$\pm$1.4 \\
    GOAT (2 given domains) & \textbf{70.3$\pm$ 2.4} & \textbf{90.3$\pm$1.4} & \textbf{79.9$\pm$1.2} & \textbf{69.8$\pm$1.4}\\
    \bottomrule    
    \label{Tab:uda}        
    \end{tabular}
}
\end{table}

\paragraph{Comparison with Gradual Self-Training}~We empirically compare our proposed GOAT with Gradual Self-Training (GST)~\citep{kumar2020understanding}. The results on Rotated MNIST, Color-Shift MNIST, Portraits and Cover Type are shown in \Cref{Tab:mnist,Tab:colormnist,Tab:portraits,Tab:covtype}. Each experiment is run 5 times with 95\% confidence interval reported. The leftmost column corresponds to the performance of GST only on given intermediate domains, which is equivalent to GOAT without any generated intermediate domain. 

\begin{table}[t!]
\begin{minipage}[t]{.5\textwidth}
\captionof{table}{\small Accuracy (\%) on Rotated MNIST.}
\vspace{-0.5em}
\resizebox{\columnwidth}{!}{
\setlength\tabcolsep{0.4em}
\begin{tabular}{c cccc c}
\toprule
{\footnotesize \# Given} & \multicolumn{5}{c}{\footnotesize \# Generated Domains of GOAT} \\
{\footnotesize Domains} & 0 (GST) & 1                             & 2                             & 3                  & 4                                   \\
\midrule
0 & \textbf{50.3$\pm$0.7}  & 48.5$\pm$2.2 & 47.2$\pm$1.7 & 48.2$\pm$2.7 & 47.5$\pm$2.8     \\
1 & 56.3$\pm$1.9 & 55.2$\pm$2.6 & 54.6$\pm$1.6 & \textbf{57.1$\pm$2.2} & 56.2$\pm$1.9 \\
2 & 61.6$\pm$2.1 & 68.0$\pm$1.4 & 67.0$\pm$2.2 & 68.1$\pm$2.2 & \textbf{70.3$\pm$2.4}\\
3 & 66.3$\pm$2.0 & 74.0$\pm$1.1 & \textbf{74.4$\pm$1.8} & 73.2$\pm$2.0 & 74.0$\pm$2.3 \\
4  & 75.5$\pm$2.0 & 83.8$\pm$2.0 & 84.0$\pm$1.6 & \textbf{86.4$\pm$2.0} & 82.7$\pm$1.8\\   
\bottomrule     
\label{Tab:mnist}           
\end{tabular}
}
\end{minipage}
\begin{minipage}[t]{.5\textwidth}
\captionof{table}{\small Accuracy (\%) on Color-Shift MNIST.}
\vspace{-0.5em}
\resizebox{\columnwidth}{!}{
\setlength\tabcolsep{0.4em}
\begin{tabular}{c cccc c}
\toprule
{\footnotesize \# Given} & \multicolumn{5}{c}{\footnotesize \# Generated Domains of GOAT} \\
{\footnotesize Domains} & 0 (GST) & 1                             & 2                             & 3                  & 4                                   \\
\midrule
0 & 40.5$\pm$5.5  & 54.4$\pm$6.9 & 63.2$\pm$4.1 & 75.7$\pm$3.8 & \textbf{79.1$\pm$3.0}     \\
1 & 54.2$\pm$5.9 & 74.7$\pm$5.3 & 79.5$\pm$2.9 & 79.3$\pm$3.4 & \textbf{85.3$\pm$3.8} \\
2 & 67.6$\pm$4.8 & 78.3$\pm$3.4 & 84.8$\pm$2.5 & 89.0$\pm$1.5 & \textbf{90.3$\pm$1.4}\\
3 & 73.9$\pm$7.6 & 80.9$\pm$6.9 & 87.4$\pm$4.2 & \textbf{90.7$\pm$2.3} & \textbf{90.4$\pm$1.5} \\
4  & 77.4$\pm$7.2 & 84.4$\pm$4.6 & \textbf{91.8$\pm$1.8} & 91.0$\pm$1.8 & \textbf{91.3$\pm$1.2}\\   
\bottomrule     
\label{Tab:colormnist}           
\end{tabular}
}
\end{minipage}
\begin{minipage}[t]{.5\textwidth}
\captionof{table}{\small Accuracy (\%) on Portraits.}
\vspace{-0.5em}
\resizebox{\columnwidth}{!}{
\setlength\tabcolsep{0.4em}
\begin{tabular}{c cccc c}
\toprule
{\footnotesize \# Given} & \multicolumn{5}{c}{\footnotesize \# Generated Domains of GOAT} \\
{\footnotesize Domains} & 0 (GST) & 1                             & 2                             & 3                  & 4                                   \\
\midrule
0 & 73.3$\pm$1.3  & \textbf{74.0$\pm$1.3}&73.5$\pm$2.2 & 73.6$\pm$2.5 & \textbf{74.2$\pm$2.5}     \\
1 & 74.5$\pm$1.6 & 76.4$\pm$1.3 & 75.5$\pm$2.6 & \textbf{76.8$\pm$1.5} & 74.7$\pm$1.7 \\
2 & 77.0$\pm$1.3 & 77.4$\pm$2.1 & 79.4$\pm$2.4 & \textbf{79.9$\pm$1.2} & 77.2$\pm$0.9       \\
3 & 80.7$\pm$2.3 & 80.9$\pm$1.6 & 81.8$\pm$1.3 & \textbf{82.3$\pm$1.3}&81.3$\pm$1.5 \\
4  & 82.0$\pm$1.4 & 82.8$\pm$1.5 & \textbf{83.6$\pm$1.5} & 82.4$\pm$1.4 & 81.8$\pm$1.6\\   
\bottomrule     
\label{Tab:portraits}           
\end{tabular}
}
\end{minipage}
\begin{minipage}[t]{.5\textwidth}
\captionof{table}{\small Accuracy (\%) on Cover Type.}
\vspace{-0.5em}
\resizebox{\columnwidth}{!}{
\setlength\tabcolsep{0.4em}
\begin{tabular}{c cccc c}
\toprule
{\footnotesize \# Given} & \multicolumn{5}{c}{\footnotesize \# Generated Domains of GOAT} \\
{\footnotesize Domains} & 0 (GST) & 1                             & 2                             & 3                  & 4                                   \\
\midrule
0 & 63.0$\pm$2.3 & 64.2$\pm$2.2 & 65.0$\pm$2.4 & \textbf{66.2$\pm$2.1} & \textbf{66.5$\pm$2.0}     \\
1 & 65.9$\pm$2.1 & 68.5$\pm$2.0 & 68.4$\pm$1.5 & \textbf{69.1$\pm$1.5} & \textbf{69.1$\pm$1.5}\\
2 & 66.9$\pm$1.4 & 68.9$\pm$1.6 & 68.4$\pm$2.1 & 69.3$\pm$1.1 & \textbf{69.8$\pm$1.4}\\
3 & 66.9$\pm$1.3 & 68.3$\pm$1.4 & \textbf{69.9$\pm$1.8} & 68.0$\pm$1.5 & 68.8$\pm$1.1 \\
4  & 67.7$\pm$1.7 & \textbf{69.6$\pm$2.1} & 68.1$\pm$2.0 & \textbf{69.7$\pm$1.2} & 69.4$\pm$2.0\\   
\bottomrule     
\label{Tab:covtype}           
\end{tabular}
}
\end{minipage}
\end{table}

In \Cref{Tab:mnist,Tab:colormnist,Tab:portraits,Tab:covtype}, the rows (“\# Given Domains”) indicate the number of \textit{given intermediate domains}. The columns (``\# Generated domains of GOAT'') represent the number of \textit{generated intermediate domains between \textbf{each pair} of consecutive given domains} (e.g., between the source domain and the first ground-truth intermediate domain, or between the $i$-th and $(i+1)$-th ground-truth intermediate domains). For instance, in the case of ``\# given domains = 3'' and ``\# generated domains = 3'', we have 5 ground-truth domains (source, target and 3 intermediate domains) and $4\times 3=12$ generated domains (since there are 4 pairs of adjacent domains along the sequence of 5 ground-truth domains), leading to 17 domains in total. 

\textit{Results:} i) From the columns of \Cref{Tab:mnist,Tab:colormnist,Tab:portraits,Tab:covtype}, we can observe that the performance of GOAT monotonically increases with more given intermediate domains, indicating that GOAT indeed benefits from given intermediate domains.
ii) From the rows of \Cref{Tab:mnist,Tab:colormnist,Tab:portraits,Tab:covtype}, we can see that with a fixed number of given domains, our GOAT can consistently outperform Gradual Self-Training (GST). The only exception is the case of Rotated MNIST without any given intermediate domain, which might be due to the challenge illustrated in Fig. \ref{fig:manifold}(a). Overall, the empirical results shown in these tables demonstrate that our GOAT can consistently improve gradual self-training (GST) with generated intermediate domains when only a few given intermediate domains are available.

\paragraph{Comparison on Intermediate Domain Generation }
In unsupervised domain adaptation (UDA), various algorithms have been developed to \textit{generate intermediate domains} to facilitate adaptation, such as~\citet{gong2019dlow,na2021fixbi,na2022covi}. Among these, CoVi~\citep{na2022covi} stands out for its exceptional (state-of-the-art) performance on UDA benchmarks. CoVi utilizes MixUp~\citep{zhang2018mixup} to generate synthetic data that are used to adapt models, and it also employs techniques of contrastive learning, entropy maximization and label consensus. In the GDA setting, it is applied in a similar manner as GST, where the adaptation is done sequentially on two adjacent domains, from the source to the target. 
To ensure a fair comparison, we fix the network structure and training recipe of CoVi to match the implementation of our GOAT, and present the results with $95\%$ confidence intervals over 5 random seeds. Since CoVi is a vision-specific model, we conduct the comparison on the three vision datasets: Rotated MNIST, Color-Shift MNIST and Portraits. The results are reported in \Cref{Tab:covi}. Our proposed algorithm, GOAT, demonstrates comparable or superior performance to CoVi across all numbers of given domains (0,1,2,3,4). This indicates that our algorithm is indeed powerful at i) generating high-quality intermediate domains useful for gradual domain adaptation and ii) utilizing given (ground-truth) intermediate domains. 

\begin{table}[t!]
\centering
\captionof{table}{\small Comparison with CoVi \citep{na2022covi} on vision datasets.}\label{Tab:covi}           
{\small
\scalebox{0.8}{
    \begin{tabular}{c|ccc|ccc|ccc}
    \toprule
    \# Given& \multicolumn{3}{c|}{Rotated MNIST} & \multicolumn{3}{c|}{Color-Shift MNIST} & \multicolumn{3}{c}{Portraits}\\ 
    {Domains} & GST & CoVi & GOAT & GST & CoVi & GOAT  & GST & CoVi & GOAT    \\
    \midrule
    0 & \textbf{50.3$\pm$0.7} & 48.4$\pm$2.1 & 48.5$\pm$2.2 & 40.5$\pm$5.5 & 40.0$\pm$5.2 & \textbf{79.1$\pm$3.0} & 73.3$\pm$1.3 & 73.7$\pm$3.5 & \textbf{74.2$\pm$2.5}     \\
    1 & 56.3$\pm$1.9 & \textbf{57.2$\pm$1.8} & \textbf{57.1$\pm$2.2} & 54.2$\pm$5.9 & 59.4$\pm$5.7 & \textbf{85.3$\pm$3.8} & 74.5$\pm$1.6 & 75.3$\pm$1.8 & \textbf{76.8$\pm$1.5} \\
    2 & 61.6$\pm$2.1 & 64.2$\pm$3.4 & \textbf{70.3$\pm$2.4} & 67.6$\pm$4.8 & 77.6$\pm$7.6 & \textbf{90.3$\pm$1.4} & 77.0$\pm$1.3 & \textbf{79.8$\pm$3.0} & \textbf{79.9$\pm$1.2}  \\
    3 & 66.3$\pm$2.0  & 71.4$\pm$1.9 & \textbf{74.4$\pm$1.8} & 73.9$\pm$7.6 & 86.4$\pm$4.7 & \textbf{90.4$\pm$1.5} & 80.7$\pm$2.3 & \textbf{82.3$\pm$1.4} & \textbf{82.3$\pm$1.3} \\
    4 & 75.5$\pm$2.0 & 80.7$\pm$3.4 & \textbf{86.4$\pm$2.0} & 77.4$\pm$7.2 & 90.9$\pm$4.0 & \textbf{91.3$\pm$1.2} & 82.0$\pm$1.4 & 83.1$\pm$1.9 & \textbf{83.6$\pm$1.5}\\   
    \bottomrule     
    \end{tabular}}
}
\end{table}

\begin{figure*}[t!]
    \centering
    \begin{subfigure}[t]{.49\linewidth}
    \includegraphics[width=0.85\linewidth]{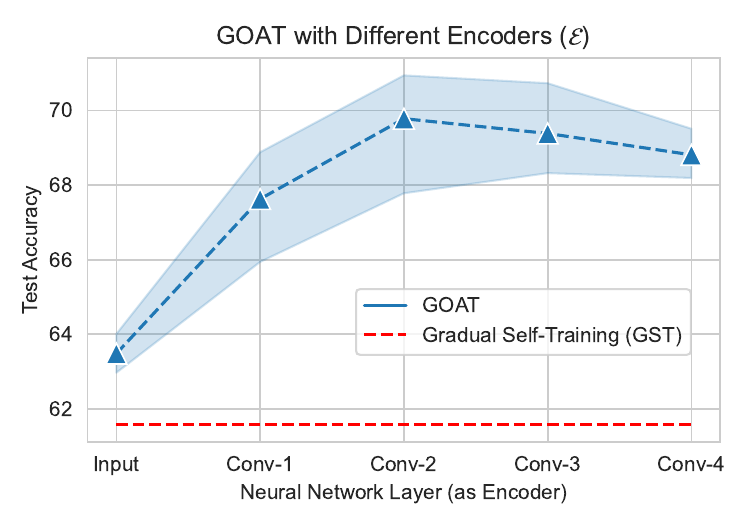}
    \caption{Choices of the encoder ($\mathcal E$).}
    \label{fig:ablation:encoder}
  \end{subfigure}
\begin{subfigure}[t]{.49\linewidth}
    \centering\includegraphics[width=0.85\linewidth]{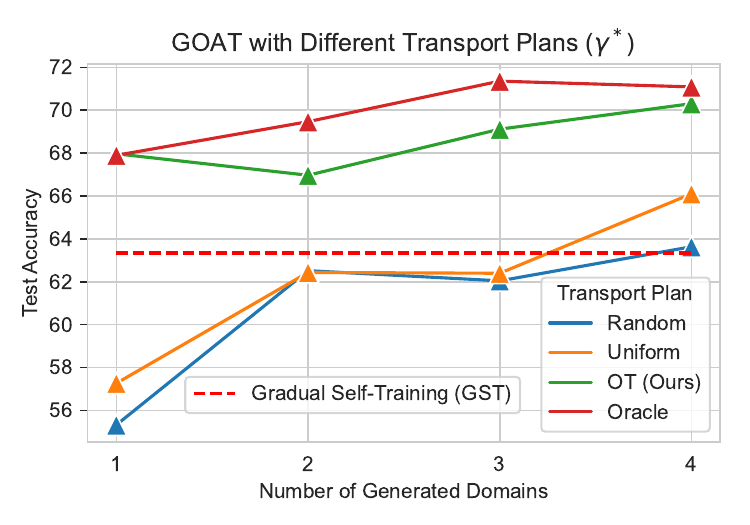}
    \caption{Choices of the transport plan ($\gamma^*$).}
    \label{fig:ablation:OT}
  \end{subfigure}
    \caption{\small Ablation studies on Rotated MNIST with 2 given intermediate domains. (a) \textit{Different neural net layers as encoders for the intermediate domain generation of GOAT.} One can see that input space is not suitable for intermediate domain generation, and the second convolutional layer (\textsc{Conv-2}) is optimal. (b) \textit{Different transport plans for the intermediate domain generation of GOAT.} Obviously, our optimal transport (OT) plan significantly outperforms the baseline transport plans (Random \& Uniform), and its performance is even close to the oracle.}
    \label{fig:ablation}
\end{figure*}

\subsection{Ablation Studies}\label{sec:exp:ablation}

\paragraph{Choice of Encoder ($\mathcal E$)}
Here, we study how the choice of the encoder (i.e., feature space) affects the performance of GOAT. Since we use a CNN, we can take each network layer as the feature space. Specifically, we consider the four convolutional layers and input space as candidate choices for the encoder. Once choosing a layer, we take all layers before it (including itself) as the encoder. In this ablation study, we use Rotated MNIST dataset with 2 given intermediate domains, and let GOAT generate 4 intermediate domains between consecutive given domains. From Fig. \ref{fig:ablation:encoder}, we can observe that directly applying GOAT in the input space performs significantly worse than the optimal choice, \textsc{Conv-2} (i.e., the second convolutional layer). This result justifies our use of an encoder for intermediate domain generation (instead of directly generating in the input space). Notably, Fig. \ref{fig:ablation:encoder} shows that deeper layers are not always better, showing a clear increase-then-decrease accuracy curve. Hence, we keep using \textsc{Conv-2} as the encoder for GOAT in all experiments.

\paragraph{Choice of Transport Plan ($\gamma^*$)}
In our Algorithm \ref{algo:main}, the data generated along the Wasserstein geodesic are essentially linear combinations of data from the pair of given domains, with weights (for each combination) assigned by the optimal transport (OT) plan $\gamma^*$. To validate that the performance gain of GOAT indeed comes from the Wasserstein geodesic estimation instead of just linear combinations, we conduct an ablation study on GOAT in Rotated MNIST with 2 given intermediate domains. Specifically, we consider four approaches to provide the transport plan $\gamma^*$: i) a random transport plan (weights are sampled from a uniform distribution), ii) a uniform transport plan (weights are the same for all combinations), iii) the optimal transport (OT) plan provided by Algorithm \ref{algo:main}, iv) the oracle transport plan\footnote{The target data of the Rotated MNIST dataset are obtained by rotating training data. Thus there is a one-to-one mapping between source and target data. The oracle plan is built from the one-to-one mapping, i.e., an element $\gamma^*_{ij}$ is non-zero if and only if $x_{0i}$ is rotated to $x_{Tj}$.}, which is the ground-truth transport plan in this study. For a fair comparison, when constructing the random and uniform plans, we ensure the number of non-zero elements is the same as that of the oracle plan (i.e., keeping the number of generated data the same). See more details in Appendix \ref{supp:exp}.

From Fig \ref{fig:ablation:OT}, we observe that, in general, the random and uniform plans do not obtain non-trivial performance gain compared with the baseline, the vanilla Gradual Self-Training (GST) without any generated domain. In contrast, our OT plan is significantly better and achieves similar performance as the oracle, demonstrating the high quality of the OT plan and justifying our algorithm design with the Wasserstein geodesic.
\section{Conclusion}
In this work, we study gradual domain adaptation. On the theoretical side, we provide a significantly improved analysis for the generalization error of the gradual self-training algorithm, under a more general setting with relaxed assumptions. In particular, compared with existing results, our bound provides an \emph{exponential} improvement on the dependency of the step size $T$, as well as a better sample complexity of $O(1/\sqrt{nT})$, as opposed to $O(1/\sqrt{n})$ as in the existing work. Based on the theoretical insight, we propose a novel algorithmic framework, Generative Gradual Domain Adaptation with Optimal Transport (GOAT), which automatically generates intermediate domains along the Wasserstein geodesic (between consecutive given domains) and applies GDA on the generated domains. Empirically, we show that GOAT can significantly outperform vanilla GDA when the given intermediate domains are scarce. Essentially, our GOAT is a promising framework that augments GDA with generated intermediate domains, leading GDA to be applicable to more real-world scenarios.

\bibliography{reference}

\newpage

\appendix

\section{Proof}\label{supp:proof}

\subsection{Proof of \cref{lemma:error-diff}}\label{supp:proof:error-diff}
\errordiff*

\begin{proof}
The population error difference of $h$ over the two domains (i.e., $\mu$ and $\nu$ is
\begin{align}
    |\eps_{\mu}(h) - \eps_{\nu}(h)| &= \left|\E_{x,y\sim \mu}[\ell(h(x), y)] - \E_{x',y'\sim\nu}[\ell(h(x'),y')]\right| \nonumber \\
    &= \left| \int \ell(h(x),y) d \mu - \int \ell(h(x'),y') d \nu  \right|\label{eq:err-diff}
\end{align}

Let $\gamma $ be an arbitrary coupling of $\mu$ and $\nu$, i.e., it is a joint distribution with marginals as $\mu$ and $\nu$. Then, \eqref{eq:err-diff} can be re-written and bounded as
\begin{align}
    |\eps_{\mu}(h) - \eps_{\nu}(h)| &= \left | \int \ell(h(x),y) d \mu - \int \ell(h(x'),y') d \gamma  \right|\\
    \text{(triangle inequality)}&\leq \int  \left |\ell(h(x),y) - \int \ell(h(x'),y') \right|d \gamma \\
    \text{($\ell$ is $\rho$-Lipschitz)} &\leq \int  \rho \left( \|h(x) - h(x')\| + \|y-y'\| \right) d\gamma \\
    \text{($h$ is $R$-Lipschitz)} &\leq \int  \rho  R\|x - x'\| +\rho \|y-y'\|  d\gamma \\
    \text{($R > 0$)} &\leq \int  \rho \sqrt{R^2 + 1} \left(\|x -x'\| + \|y-y'\|\right)  d\gamma 
\end{align}
Since $\gamma$ is an arbitrary coupling, we know that 
\begin{align}
    |\eps_{\mu}(h) - \eps_{\nu}(h)| &\leq \inf_{\gamma}  \int  \rho \sqrt{R^2 + 1} \left(\|x -x'\| + \|y-y'\|\right)  d\gamma \\
    &= \rho \sqrt{R^2 + 1} W_1(\mu, \nu)
\end{align}

Since the Wasserstein distance $W_p$ is monotonically increasing for $p \geq 1$, we have the following bound, 
\begin{align}\label{eq:error-diff-neighbour}
    |\eps_{\mu}(h) - \eps_{\nu}(h)|\leq \rho \sqrt{R^2 + 1} W_1(\mu, \nu) \leq \rho \sqrt{R^2 + 1} W_p(\mu, \nu)
\end{align}
\end{proof}

\subsection{Proof of \cref{prop:algorithm-stability}}\label{supp:proof:algo-stability}

\AlgoStability*

\begin{proof}
Define $\wh \eps_{\mu}(h) \coloneqq \frac{1}{|S|}\sum_{x\in S} \ell(h(x), y)$ as the empirical loss over the dataset $S$, where $S$ consists of samples i.i.d. drawn from $\mu(X)$ and $y$ is the ground truth label of $x$.

Then, we have the following sequence of inequalities:
\begin{align*}
    (\text{Use Lemma A.1 of~\citet{kumar2020understanding}})\quad \eps_{\mu}(h) &\leq \wh \eps_{\mu}( \hat h)+ \cO\left(R_n(\ell \circ \mathcal{H})+\sqrt{\frac{\log(1/\delta)}{n}}\right)\\
    \left(\text{since } h(x)=\hat h(x) ~\forall x \in S\right)~&= \widehat \eps_{\mu}(h)+ \cO\left(R_n(\ell \circ \mathcal{H})+\sqrt{\frac{\log(1/\delta)}{n}}\right)\\
    (\text{Use Lemma A.1 of~\citet{kumar2020understanding} again})~&\leq \eps_{\mu} (h) + \cO\left(2 R_n(\ell \circ \mathcal{H})+2 \sqrt{\frac{\log(1/\delta)}{n}}\right)\\
    (\text{By \cref{lemma:error-diff}})&\leq \eps_{\nu} (h) + \rho \sqrt{R^2+1} W_p(\mu,\nu) \\
    &\quad + \cO\left( R_n(\ell \circ \mathcal{H})+\sqrt{\frac{\log(1/\delta)}{n}}\right)\\
    (\text{By Talagrand's lemma with Assumption \ref{assum:Lipschitz-loss},\ref{assum:bounded-complexity}})&\leq \eps_{\nu} (h) + \rho \sqrt{R^2+1} W_p(\mu,\nu) \\
    &\quad + \cO\left( \frac{\rho B}{\sqrt n} +\sqrt{\frac{\log(1/\delta)}{n}}\right)\\
    &\leq \eps_{\nu} (h)  + \cO\left( W_p(\mu,\nu)+ \frac{\rho B}{\sqrt n} +\sqrt{\frac{\log(1/\delta)}{n}}\right)
\end{align*}

For the step using Talagrand's lemma~\citep{talagrand1995concentration}, the proof of Lemma A.1 of~\citet{kumar2020understanding} also involves an identical step, thus we do not replicate the specific details here. 
\end{proof}

\subsection{Proof of \cref{lemma:disc-bound}}\label{supp:proof:discrepancy-bound}
\DiscBound*

\begin{proof}
Within our setup of gradual self-training,
\begin{align*}
    \disc(\bq_{t}) &= \sup_{h\in \cH} \left( \eps_{t-1}(h) - \sum_{\tau=0}^{t-1} q_\tau \cdot \eps_{\tau}(h) \right) \\
    &= \sup_{h\in \cH} \left(  \sum_{\tau=0}^{t-1} q_\tau \left(\eps_{t-1}(h) -  \eps_{\tau}(h) \right)\right)\\
    &\leq \sup_{h\in \cH} \left(  \sum_{\tau=0}^{t-1} q_\tau |\eps_{t-1}(h) -  \eps_{\tau}(h) |\right)\\
    (\text{By \cref{lemma:error-diff}}) &\leq \rho \sqrt{R^2 + 1} \sum_{\tau=0}^{t-1}q_\tau\cdot (t-\tau-1)\Delta
\end{align*}
With $\bq_{t} = \bq_{t}^* = (\frac{1}{t},..., \frac{1}{t})$, this bound becomes
\begin{align*}
    \disc(\bq_t^*) \leq \rho \sqrt{R^2 + 1} \sum_{\tau=0}^{t-1}q_\tau\cdot (t-\tau-1)\Delta = \rho \sqrt{R^2 + 1}~ \frac{t}{2} \Delta = \cO(t\Delta)
\end{align*}
and it is trivial to show that this upper bound is smaller than any other $\bq_t$ with $\bq_t \neq \bq_t^*$.
\end{proof}

\subsection{Proof of \cref{thm:gen-bound}}\label{supp:proof:gen-bound}

\begin{figure}[t!]
\begin{center}
\centerline{\includegraphics[width=.9\textwidth]{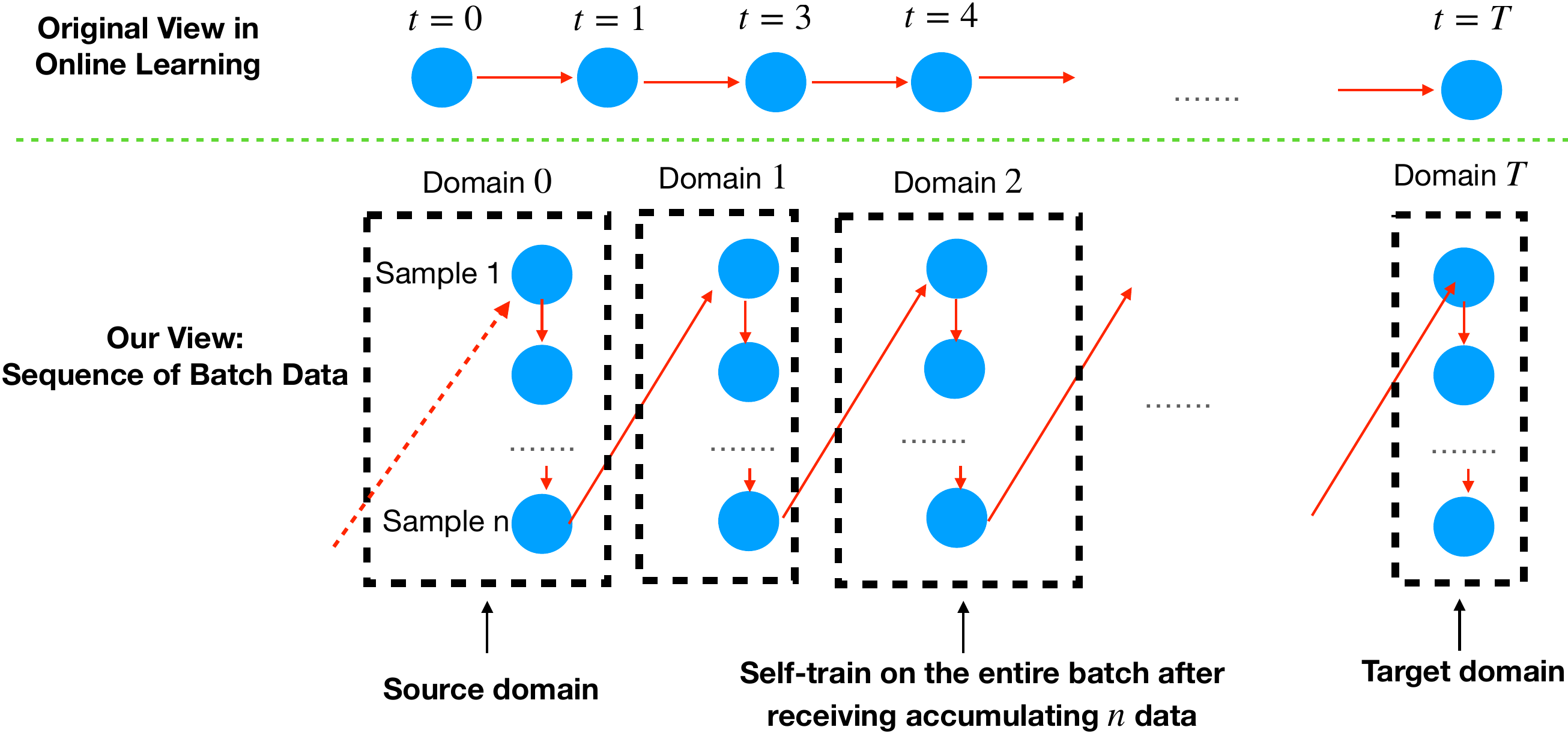}}
\caption{Our reductive view of gradual self-training that is helpful to \cref{thm:gen-bound}.}
\label{fig:our-view}
\end{center}
\vskip -.4in
\end{figure}

\GenBound*

\paragraph{A Reductive View of the Learning Process of Gradual Self-Training}

If we directly apply Corollary 2 of~\citet{kuznetsov2020discrepancy}, we can obtain a generalization bound as
\begin{align}
        \eps_{\mu_T} (h) &\leq \sum_{t=0}^{T} q_t \eps_{\mu_t} (h) + \disc(\bq_{T+1}) + \|\bq_{T+1}\|_2 + 6M \sqrt{4\pi \log T} \mathcal R_T^{\mathrm{seq}} (\ell \circ \mathcal H) \nonumber\\
        &\quad + M \|\bq_{T+1}\|_2 \sqrt{8 \log \frac{1}{\delta }}\nonumber\\
        &\leq \sum_{t=0}^{T} q_t \eps_{\mu_t} (h) + O(T \Delta) + \cO(\frac {1} {\sqrt{T}}) + 6M \sqrt{4\pi \log T} \mathcal R_T^{\mathrm{seq}} (\ell \circ \mathcal H) +  \cO(\sqrt{\frac{\log \frac{1}{\delta }}{T}})\label{eq:supp:proof:gen-bound:naive-bound}
\end{align}
where $M$ is an upper bound on the loss (\cref{lemma:bounded-loss} proves such a $M$ exists), and the last inequality is obtained by setting $\bq_{T+1} = \bq^*_{T+1} = (\frac{1}{T+1},\dots, \frac{1}{T+1})$.

A typical generalization bound involves terms with dependence on $N$ (the training set size), usually in the form $\cO(\sqrt{\frac 1 N})$, and these terms vanish in the infinite-sample limit (i.e., $N\rightarrow \infty$). These terms also appear in standard generalization bounds of unsupervised domain adaptation~\citep{ben2007analysis,zhao2019domain}, where $N$ becomes the number of available unlabelled data in the target domain.

In the case of gradual domain adaptation, the total number of available unlabelled is $Tn$, and we would expect $Tn$ will appear in a form similar to $\cO(\sqrt{\frac{1}{nT}})$, which vanishes in the infinite-sample limit (i.e., $nT\rightarrow \infty$). However, the generalization bound \eqref{thm:gen-bound} has terms $\cO(\sqrt{\frac 1 T})$ and $\cO(\sqrt{\frac{\log \frac 1 \delta}{T}})$, which does not vanish even with infinite data per domain, i.e., $n\rightarrow \infty$ (certainly results in $Tn\rightarrow \infty$).

We attribute this issue to the coarse-grained nature of online learning analyses such as~\citet{kuznetsov2016time,kuznetsov2020discrepancy}, which do not take data size per domain into consideration.

To address this issue, we propose a novel reductive view of the entire learning process of gradual self-training, leading to a more fined-grained generalization bound than Eq. \eqref{eq:supp:proof:gen-bound:naive-bound}. 

We draw a diagram to illustrate this reductive view in Fig. \ref{fig:our-view}. Specifically, instead of viewing each domain as the smallest element, we zoom in to the sample-level and view each sample as the smallest element of the learning process. We view the gradual self-training algorithm as follows: it has a fixed data buffer of size $n$, and each newly observed sample is pushed to the buffer; the model updates itself by self-training once the buffer is full; after the update, the buffer is emptied. Notice that this view does not alter the learning process of gradual self-training.

With this reductive view, the learning process of gradual self-training consists of $nT$ smallest elements (i.e., each sample is a smallest element), instead of $T$ elements (i.e., each domain is a smallest element) in the view of online learning works~\citep{kuznetsov2016time,kuznetsov2020discrepancy}. As a result, terms of order $\cO\left(\sqrt{\frac 1 T}\right)$ in \eqref{eq:supp:proof:gen-bound:naive-bound} becomes $\cO\left(\sqrt{\frac{1}{nT}}\right)$, and terms of order $\cO\left(\frac{T}{n}\right)$ also vanish as $n\rightarrow \infty$. Notably, the upper bounds on the terms $\sum_{t=0}^{T} q_t \eps_{\mu_t} (h)$ and $\disc(\bq_{T+1}) $ in \eqref{eq:naive-gen-bound} do not become larger with this view, since there is no distribution shift within each domain (e.g., the learning process over the first $n$ samples in Fig. \ref{fig:our-view} does not involve any distribution shift, and the iteration $n-1\mapsto n$ incurs a distribution shifts, since the $(n-1)$-th sample is in the first domain while the $n$-th sample is in the second domain).

With this reductive view, we can finally obtain a tighter generalization bound for gradual self-training without the issues mentioned previously.
\begin{proof}
With the inductive view introduced above, we can improve the naive bound \eqref{eq:supp:proof:gen-bound:naive-bound} to
\begin{align*}
 \eps_{\mu_T} (h_{T}) &\leq  \sum_{t=0}^{T} \sum_{i=0}^{n-1} q_{nt+i} \eps_{\mu_t} (h_{T}) + \disc(\bq_{n(T+1)}) + \|\bq_{n(T+1)}\|_2 + 6M \sqrt{4\pi \log nT} \mathcal R_{nT}^{\mathrm{seq}} (\ell \circ \mathcal H)\eq \label{eq:supp:proof:gen-bound:first-line}\\
 &\qquad + M \|\bq_{n(T+1)}\|_2 \sqrt{8 \log \frac{1}{\delta }}\nonumber\\    
 &\leq  \frac {1} {T+1} \sum_{t=0}^{T} \eps_{\mu_t}(h_{T}) +  \rho \sqrt{R^2 + 1}~ \frac{T+1}{2} \Delta  + \frac{1}{\sqrt{nT}}+ 6M \sqrt{4\pi \log nT}R_{nT}^{\mathrm{seq}} (\ell \circ \mathcal H) \nonumber\\
 &\quad + M \sqrt{\frac{8\log 1 / \delta}{ nT}} \\
 &\leq \eps_{\mu_0}(h_0) +  \cO\left(T\Delta+ T\sqrt{\frac{\log 1 / \delta}{n}}+ \frac{1}{\sqrt{nT}} + \rho R\sqrt{\frac{(\log nT)^7}{nT}} + \sqrt{\frac{\log 1/\delta}{nT}}\right )
 \end{align*}
 where $\bq_{n(T+1)}$ is taken as $\bq_{n(T+1)}= \bq_{n(T+1)}^*=(\frac{1}{n(T+1)}), \dots, \frac{1}{n(T+1)})$. We used the following facts when deriving the inequalities above:
 \begin{itemize}
     \item The first term of \eqref{eq:supp:proof:gen-bound:first-line} has the following bound
     \begin{align}\sum_{t=0}^{T} \sum_{i=0}^{n-1} q_{nt+i} \eps_{\mu_t} (h_{T}) &= \frac{1}{T+1} \sum_{t=0}^{T} \eps_{\mu_t}(h_{T})\nonumber\\
     &\leq \eps_{\mu_0}(h_0) +  \cO(T\Delta) + \cO\left(\frac{1}{\sqrt n}+ T\sqrt{\frac{\log 1 / \delta}{n}}\right)
     \end{align}
     which is obtained by recursively apply \cref{lemma:error-diff} and \cref{prop:algorithm-stability} to each term in the summation. For example, the last term in $\sum_{t=0}^{T} \eps_{\mu_t}(h_{T})$ can bounded by \cref{prop:algorithm-stability} as follows
     \begin{align*}
    (\text{By \cref{prop:algorithm-stability}})\quad  \eps_{T}(h_{T}) &\leq \eps_{\mu_{T-1}} (h_{T-1}) + \cO\left(W_p(\mu_T,\mu_{T-1}) + \frac{1}{\sqrt n}+\sqrt{\frac{\log 1 / \delta}{n}}\right)\\
    (\text{Same as the above step})&\leq \ldots \\
    &\leq \eps_{\mu_{0}} (h_{0}) + \cO(T\Delta+ \cO\left(T\sqrt{\frac{\log 1 / \delta}{n}}\right)\eq\label{eq:supp:proof:gen-bound:1st-term:last-term}
    \end{align*}
    and the second last term can be bounded similarly with the additional help of \cref{lemma:error-diff}
    \begin{align*}
         (\text{By \cref{lemma:error-diff}})\quad  \eps_{T-1}(h_{T}) &\leq \eps_{\mu_{T}} (h_{T}) + \cO(W_p(\mu_{T},\mu_{T-1})) \\
     (\text{Apply Eq. \eqref{eq:supp:proof:gen-bound:1st-term:last-term}})&\leq \eps_{\mu_{0}} (h_{0}) + T \Delta + \cO\left( T\sqrt{\frac{\log 1 / \delta}{n}}\right)
    \end{align*}
    All the rest terms (i.e., $\eps_{T-2}(h_{T}),\dots,\eps_{0}(h_{T})$) can be bounded in the same way.
\item The second term of \eqref{eq:supp:proof:gen-bound:first-line} can be bounded by applying \cref{lemma:disc-bound}.
\item The value of $R_{nT}^{\mathrm{seq}} (\ell \circ \mathcal H)$ can be bounded by combining \cref{lemma:seq-rademachor-composite} and Example \ref{example:neural-net}.
 \end{itemize}
\end{proof}

\subsection{Helper Lemmas}\label{supp:proof:helper-lemmas}

\begin{restatable}[Bounded Loss]{lem}{BoundedLoss}\label{lemma:bounded-loss}
For any $x\in \mathcal X, y \in \mathcal Y, h\in \mathcal H$, the loss $\ell(x,y)$ is upper bounded by some constant $M$, i.e., $l(h(x), y)\leq M$.
\end{restatable}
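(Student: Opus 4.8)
The plan is to bound $\ell(h(x),y)$ by separately bounding its two arguments over all admissible $x,y,h$, and then invoking the Lipschitz continuity of $\ell$. Concretely, I would show that $\{h(x):h\in\cH,\ x\in\cX\}$ lies in a bounded interval, that $\cY$ is bounded, and that $\ell$ is continuous; a continuous function on a bounded (indeed compact) product set is then bounded by some constant $M$, which is exactly the claim.

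For the first argument, by \cref{assum:input-bound} the set $\cX$ sits inside the unit $\ell_2$ ball, so $\operatorname{diam}(\cX)\le 2$; fixing any anchor point $x_0\in\cX$ and using that $h$ is $R$-Lipschitz (\cref{assum:Lipschitz-model}) gives $|h(x)|\le |h(x_0)|+R\|x-x_0\|_2\le |h(x_0)|+2R$ for every $x\in\cX$. Since in the binary classification setting the hypotheses have uniformly bounded output --- for the linear class $\{x\mapsto w^\top x:\|w\|_2\le R\}$ one even has $|h(x)|\le\|w\|_2\|x\|_2\le R$, and for the norm-bounded networks of Example~\ref{example:neural-net} the same holds up to a constant --- there is a finite $B_{\cH}$ with $|h(x)|\le B_{\cH}$ for all $h\in\cH$, $x\in\cX$. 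For the second argument, $\cY$ is a compact subset of $\bR$, hence $|y|\le C_{\cY}$ for a finite constant. Finally, combining the two one-sided Lipschitz bounds of \cref{assum:Lipschitz-loss} via the triangle inequality shows $\ell$ is jointly Lipschitz, $|\ell(a,b)-\ell(a',b')|\le\rho(|a-a'|+|b-b'|)$; fixing any reference pair $(a_0,b_0)$ in its domain then yields $\ell(h(x),y)\le \ell(a_0,b_0)+\rho(B_{\cH}+|a_0|+C_{\cY}+|b_0|)=:M$, a finite constant independent of $x,y,h$.

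The only subtle step is the uniform absolute bound $|h(x)|\le B_{\cH}$: the Lipschitz assumption controls only the oscillation of $h$ across $\cX$, not the level of $h$ at a single point, so one genuinely needs that the hypothesis classes used for binary classification throughout the paper (linear models and the norm-bounded neural networks of Example~\ref{example:neural-net}) have bounded range, which they do. Everything else is a routine application of compactness together with the Lipschitz assumptions already in force, so I expect no further obstacles.
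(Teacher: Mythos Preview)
Your proposal is correct and follows essentially the same route as the paper's proof, which simply invokes compactness of $\cX$ and $\cY$ together with the Lipschitz assumptions on $h$ and $\ell$. If anything, you are more careful than the paper: you correctly flag that the $R$-Lipschitz condition alone only controls the oscillation of $h$, not its level, and you resolve this by appealing to the specific norm-bounded hypothesis classes (linear models, norm-constrained networks) actually used in the paper---a point the paper's one-line argument glosses over.
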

\begin{proof}
Notice that i) the input $x$ is bounded in a compact space, specifically, $\|x\|_2 \leq 1$ (ensured by Assumption \ref{assum:input-bound}), ii) $y$ lives in a compact space in $\bR$ (defined in Sec. \ref{sec:setup}), iii) the hypothesis $h\in \mathcal H$ is $R$-Lipschitz, and iv) the loss function $\ell$ is $\rho$-Lipschitz.

Combining these conditions, one can easily find that there exists a constant $M$ such that $l(h(x), y)$ for any $x\in \mathcal X, y\in \mathcal Y, h\in \mathcal H$.
\end{proof}

\begin{restatable}[Lemma 14.8 of~\citet{rakhlin2014notes}]{lem}{SeqRademacherComposite}\label{lemma:seq-rademachor-composite} For $\rho$-Lipschitz loss function $l$, the sequential Rademacher complexity of the loss class $\ell \circ \mathcal H$ is bounded as
\begin{align}
     \mathcal{R}^{\mathrm{seq}}_T(\ell \circ \mathcal H)  \leq  \cO(\rho \sqrt{(\log T)^3}) \mathcal{R}^{\mathrm{seq}}_T(\mathcal H)
\end{align}
\end{restatable}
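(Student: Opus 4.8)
The target statement is \cref{lemma:seq-rademachor-composite}: for a $\rho$-Lipschitz loss $\ell$, the sequential Rademacher complexity of the composite class $\ell\circ\cH$ is bounded by $\cO(\rho\sqrt{(\log T)^3})\,\cR^{\mathrm{seq}}_T(\cH)$. This is precisely Lemma 14.8 in \citet{rakhlin2014notes}, so the cleanest route is to reproduce that argument in our notation. The plan is to establish a \emph{sequential contraction} inequality: composing with a Lipschitz map cannot blow up the sequential Rademacher complexity by more than a polylogarithmic factor in $T$. Unlike the classical (i.i.d.) Rademacher contraction of Ledoux--Talagrand, the sequential version is \emph{not} dimension-free --- the extra $\sqrt{(\log T)^3}$ factor is genuinely necessary, and the proof goes through the sequential covering number / Dudley-type chaining rather than a one-line symmetrization trick.

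First I would pass from $\cR^{\mathrm{seq}}_T$ to the sequential covering numbers $\cN_1(\alpha,\cH,T)$ (or $\cN_\infty$) via the sequential analogue of Dudley's entropy integral (Theorem/Proposition in \citet{rakhlin2014notes,rakhlin2015online}): $\cR^{\mathrm{seq}}_T(\cH)$ is bounded above and below, up to constants and a $\log$-factor, by an integral of $\sqrt{\log\cN_2(\alpha,\cH,T)}$ over $\alpha$. Second, I would observe that composition with a $\rho$-Lipschitz function is $\rho$-Lipschitz on each coordinate, hence an $\alpha$-cover of $\cH$ on a given $\cX$-tree (and $\cY$-tree) yields a $\rho\alpha$-cover of $\ell\circ\cH$: $\cN_p(\rho\alpha,\ell\circ\cH,T)\le\cN_p(\alpha,\cH,T)$. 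Third, plug this covering-number comparison back into the entropy integral for $\ell\circ\cH$; the rescaling by $\rho$ pulls out a factor $\rho$, and re-converting the entropy integral of $\cH$ back to $\cR^{\mathrm{seq}}_T(\cH)$ costs the polylogarithmic overhead. Carefully tracking which direction of the Dudley bound is tight (the upper bound on $\cR^{\mathrm{seq}}_T(\ell\circ\cH)$ via the integral, then the lower bound relating the integral back to $\cR^{\mathrm{seq}}_T(\cH)$) is what produces the stated $\sqrt{(\log T)^3}$; each of the two conversions contributes a $\sqrt{\log T}$-type factor, plus one more from the offset/truncation in the chaining.

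The main obstacle is precisely this double conversion: unlike the i.i.d.\ setting, there is no clean ``sequential Ledoux--Talagrand'' statement giving contraction directly at the level of $\cR^{\mathrm{seq}}$, so one is forced to detour through covering numbers, and the entropy integral is only a two-sided (lossy) proxy for $\cR^{\mathrm{seq}}$ on a binary tree. Getting the polylog exponent right --- and making sure the diameter/truncation endpoints of the Dudley integral are controlled using the boundedness of $\cH$ (which follows from Assumptions~\ref{assum:input-bound} and~\ref{assum:Lipschitz-model}, giving $|h(x)|\le R$) and of $\ell$ (\cref{lemma:bounded-loss}) --- is the delicate part. Since the result is verbatim Lemma 14.8 of \citet{rakhlin2014notes}, I would either cite it directly (as the excerpt already does) or, for a self-contained appendix, reproduce the chaining argument above with the constants left implicit inside the $\cO(\cdot)$.
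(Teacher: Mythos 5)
The paper itself gives no argument for this lemma beyond the citation (``See~\citet{rakhlin2014notes}''), and your proposal — cite Lemma 14.8 directly, or reproduce its proof via the sequential Dudley entropy integral, the covering-number comparison $\cN_p(\rho\alpha,\ell\circ\cH,T)\le\cN_p(\alpha,\cH,T)$ for Lipschitz composition, and the lossy back-conversion from sequential covering numbers to $\cR^{\mathrm{seq}}_T(\cH)$ that produces the $\sqrt{(\log T)^3}$ factor — is exactly the route taken in that reference. So your proposal is correct and matches the paper's (and the source's) approach; no gap to report.
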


\begin{proof}
See~\citet{rakhlin2014notes}.
\end{proof}

\subsection{Derivation of the Optimal $T$}\label{supp:proof:optimal-T}

In Sec. \ref{sec:optimal-path}, we show a variant of the generalization bound in \eqref{eq:simplified} as
\begin{equation}
\eps_{T}(h_T) \leq  \eps_{0}(h_0) \mathrm{+} \inf_{\mathcal{P}} \widetilde{\cO}\biggl(T\deltam \mathrm{+}\frac{T}{\sqrt n} \mathrm{+} \sqrt{\frac{1}{nT}}~\biggr)
\end{equation}
where $\deltam$ is an upper bound on the average $W_p$ distance between any pair of consecutive domains along the path, i.e., $\Delta_{\max} \geq \frac{1}{T}\sum_{t=1}^T W_p(\mu_{t-1}, \mu_t)$.

Given that $T,\deltam,n$ are all positive, we know there exists an optimal $T=T^*$ that minimizes the function
\begin{align}
    f(T)\coloneqq T\deltam \mathrm{+}\frac{T}{\sqrt n} \mathrm{+} \sqrt{\frac{1}{nT}}~ ,
\end{align}
and one can straightforwardly derive that
\begin{align}
    T^* = \left(\frac{1}{2(1+\deltam \sqrt{n}~)}\right)^{\frac 2 3}~.
\end{align}
\begin{proof}
The derivative of $f(T)$ is
\begin{align}
f'(T) = \deltam + \frac{1}{\sqrt n} - \frac{1}{2\sqrt{n}}T^{-\frac 3 2}~,
\end{align}
and the second-order derivative of $f(T)$ is
\begin{align}\label{eq:supp:optimal-T:2nd-grad}
f''(T) =   \frac{3}{4\sqrt{n}}T^{-\frac 5 2}~.
\end{align}
Eq. \eqref{eq:supp:optimal-T:2nd-grad} indicates that $f(T)$ is strictly convex in $T\in(0,\infty)$. Then, we only need to solve for the equation
\begin{align}
    f'(T) = 0 
\end{align}
as $T\in (0,\infty)$, which gives our the solution
\begin{align}
    T^* = \left(\frac{1}{2(1+\deltam \sqrt n~)}\right)^{\frac 2 3} ~.
\end{align}
\end{proof}





\section{Theoretical Arguments}\label{supp:theory}

\paragraph{On \Cref{prop:domain-path}} The inequality in \eqref{eq:prop:domain-path} holds true since the Wasserstein distance metric $W_p$ is known to enjoy the property of triangle inequality. In \eqref{eq:prop:domain-path}, the equality is obtained as the intermediate domains $\mu_1,\dots,\mu_{T-1}$ sequentially fall along the Wasserstein geodesic between $\mu_0$ and $\mu_T$, since the geodesic is defined as the shortest path of distributions connecting $\mu_0$ and $\mu_T$ under the $W_p$ metric. 

\paragraph{On \Cref{prop:LP}} This linear program (LP) formulation of optimal transport is also called Kantorovich LP in the literature. One can find details and proof of Kantorovich LP in~\citep{peyre2019computational}.

\paragraph{On the Encoder} With a $\rho_{\mathcal E}$-Lipschitz continuous encoder $\cE: \cX \mapsto \cZ$ mapping inputs to the feature space $\mathcal Z$ (i.e., $z \gets \cE(x)$ for any input $x$), the order of the generation bound \eqref{thm:gen-bound} stays the same. The reason is as follows: The bound \eqref{thm:gen-bound} is linear in terms of $\rho_h$ \footnote{The dependence on $\rho_{h}$ is hidden with the big-O notation in \eqref{thm:gen-bound}}, which is the Liphschitz constant of the classifier $h$; With the encoder $\mathcal E$, one can effectively view the whole encoder-classifier model as $f : \mathcal X \mapsto \mathcal Y$ such that $f(x) = h(\mathcal E(x))$; Then, the Liphschitz constant of $f$ is obviously $\rho = \rho_{\mathcal E} \rho_{h}$ since $f$ is a composite function of $h \circ \mathcal E$; Finally, replacing $h$ with $f$ in the analysis, one can see that the order of the bound \eqref{thm:gen-bound} stays the same, with some terms getting multiplied by a factor of $\rho_{\mathcal E}$ (i.e., equivalent to replacing the term $\rho_h$ with $\rho=\rho_{\mathcal E}\rho_h$ in the bound).

\section{More Details on the Proposed Algorithm}\label{supp:algo}
To reduce the $\cO(n^3 \log n)$ complexity of the exact OT calculation to $\cO(n^2)$, we can solve the entropy-regularized OT problem~\cite{cuturi2013sinkhorn} instead. Consider source data $\{x_{0i}\}_{i=1}^{m}$ and target data $\{x_{Tj}\}_{i=1}^{n}$, the entropy-regularized OT plan $\gamma^*_\lambda$ under the transport cost function $c$ is obtained by solving
\begin{align}\label{eq:sinkhorn}
\begin{split}
\gamma^*_\lambda=\argmin _{\gamma \in \mathbb{R}_{\geq 0}^{m \times n}} \sum_{i, j} \gamma_{i, j} c(x_{0i},x_{Tj})+\lambda \sum_{i,j}\gamma_{i,j}\log\gamma_{i,j},\\
\text { s.t. }~ \gamma \bm{1}_{n}= \frac{1}{m} \bm{1}_{m}~\text{ and }~ \gamma^{T} \bm{1}_m = \frac{1}{n}\bm{1}_n,
\end{split}
\end{align}
where $\lambda$ is a regularization coefficient. The low computational complexity comes at the cost of a dense optimal transport plan, i.e., $\gamma_\lambda^*$ is generally a dense matrix rather than a sparse one\footnote{As we discussed in Sec. \ref{sec:algo:analysis}, $\gamma^*$ has at most $n+m-1$ non-zero entries, thus it is a sparse matrix.}. Thus, $\cO(mn)$ non-zero entries will be generated in $\gamma_\lambda^*$, and this quadratic space complexity becomes intractable for large datasets. To remedy this issue, we design two methods to zero out insignificant entries in $\gamma^*_\lambda$ to reduce the space complexity:
\begin{enumerate}
    \item \textbf{Small-value cutoff.} Although the transport plan $\gamma_\lambda^*$ resulted from entropy-regularized OT is dense, most entries still have values close to 0. Those entries of tiny magnitude can be zeroed out without having a noticeable impact on the final results.
    \item \textbf{Confidence cutoff.} Consider the one-hot encoded matrix of source labels $Y_0\in\{0,1\}^{m\times \text{\#class}}$ and the entropy-regularized OT plan $\gamma^*_\lambda$. The logits of target prediction by optimal label transport is
    \begin{align}
        \widehat{Y}_T={\gamma^*_\lambda}^\trans Y_0.
    \end{align}
    Then, we can calculate a confidence score for each target prediction by the logits. Using a certain confidence threshold, the target samples that the transport plan is unconfident with can be filtered out, making the transport plan more sparse.
\end{enumerate}
With proper choices of cutoff values, those methods can reduce the space complexity from $\cO(mn)$ to $\cO(m+n)$ without noticeable compromise on the final performance.

\subsection{Number of Intermediate Domains}
The number of intermediate domains can be considered as a hyperparameter. The theory shows that there exists an optimal number $T^*$ in terms of self-training performance in the GDA setting:
\begin{align} \label{eq:num-domains}
    T^*=\max\left\{\frac{L}{\Delta}, \Tilde{\cO}\left(\left( \frac{1}{1+\Delta\sqrt{n}} \right)^{2/3} \right)  \right\},
\end{align}
where $L$ is the $W_p$ distance between the source and target and $\Delta$ is the average $W_p$ distance between any pair of consecutive domains. 

Although Eq. \eqref{eq:num-domains} shows the relationship between the optimal number of domains and source-target distance, it is still unclear what exact number should be chosen. To solve the problem, we use a heuristic hyperparameter tuning approach. Specifically, we use a subset of the target set with highly confident pseudo-labels as a validation set. Then, with all other components of the algorithm fixed, we evaluate the performance using different numbers of domains on the target validation set and select the (empirically) optimal number of intermediate domains.

\section{More Details on Experiments}
\label{supp:exp}

\paragraph{Network Implementation.} 
For the 4-layer CNN encoder used in experiments on Rotated MNIST and Portraits, we use convolutional layers with kernel size 3 and SAME padding. During self-training, we train on each domain for 10 epochs. Empirically, we verify that regularization techniques are important for the success of gradual self-training, including using dropout layers and early stopping.

For the VAE used to produce Fig. \ref{fig:mnist}, we use 4 convolutional layers with kernel size 3 and max-pooling, followed by a fully-connected layer with 128 neurons as the encoder. For the decoder, we use four deconvolutional layers with kernel size 3~\citep{kingma2014VAE}. We use ReLU activation for the layers. The encoder and decoder are jointly trained on data from source and target in an unsupervised manner with the Adam optimizer~\citep{adam} (learning rate as $10^{-4}$ and batch size as 512).

\paragraph{Encoder Pretraining.} 
We pretrain an encoder on the given domains. During pretraining, we use a 3-layer MLP on top of the encoder and perform self-training on the given domains. Specifically, we first fit the model on the source domain, then iteratively use the model to pseudo-label the next domain and self-train on it. After pretraining, the MLP is discarded and the encoder is fixed to provide features for the downstream tasks.

\paragraph{OT ablation.} 
When designing different plans, we make sure that the number of non-zero entries is equal so that in the domains generated by those plans, the amount of data is the same. For the random plan, we first initialize a zero matrix, then sample the same amount of entries as the ground-truth plan in the matrix, and fill in a weight value between 0 to 1 uniformly at random. For the uniform plan, we use the same procedure except that we fill in the same weight for each sampled entry. In the end, we normalize the matrix.

\end{document}